\documentclass{article}
\usepackage[preprint,main]{neurips_2026}

\usepackage[utf8]{inputenc}
\usepackage[T1]{fontenc}
\usepackage{hyperref}
\usepackage{url}
\usepackage{amsmath,amssymb,amsfonts,amsthm}
\usepackage{xcolor}
\usepackage{booktabs}
\usepackage{algorithm}
\usepackage{algorithmic}
\usepackage{graphicx}
\usepackage{multirow}
\usepackage{enumitem}
\usepackage{microtype}
\usepackage{thmtools,thm-restate}
\newtheorem{assumption}{Assumption}
\newtheorem{statement}{Statement}
\newtheorem{lemma}{Lemma}

\title{Primal-Dual Guided Decoding for Constrained Discrete Diffusion}

\author{
  Federico Tomasi\thanks{These authors contributed equally to this work.} \\
  Spotify \\
  \texttt{federicot@spotify.com} \\
  \And
  Dmitrii Moor\footnotemark[1] \\
  Spotify \\
  \texttt{dmitriim@spotify.com} \\
  \And
  Alice Wang \\
  Spotify \\
  \texttt{alicew@spotify.com} \\
  \And
  Mounia Lalmas \\
  Spotify \\
  \texttt{mounia@acm.org} \\
}

\begin{document}

\maketitle

\begin{abstract}
Discrete diffusion models generate structured sequences by progressively unmasking tokens, but enforcing global property constraints during generation remains an open challenge. We propose primal-dual guided decoding, an inference-time method that formulates constrained generation as a KL-regularised optimisation problem and solves it online via adaptive Lagrangian multipliers. At each denoising step, the method modifies token logits through an additive, constraint-dependent bias, with multipliers updated by mirror descent based on constraint violation. The bias arises as the optimal KL-regularised projection of the constraint, so the constrained distribution remains as close as possible to the model's unconstrained distribution while still satisfying the constraint. The method requires no retraining and no additional model evaluations beyond standard sampling, supports multiple simultaneous constraints, and provides formal bounds on constraint violation. We evaluate our approach on topical text generation, molecular design, and music playlist generation, showing that a single algorithm instantiated via domain-specific scoring functions improves constraint satisfaction while preserving relevant domain-specific quality metrics.
\end{abstract}

\section{Introduction}
 
Discrete diffusion models~\citep{sahoo2024mdlm,austin2021d3pm} have recently emerged as a powerful alternative to autoregressive (AR) models for generating structured discrete sequences. MDLM~\citep{sahoo2024mdlm} progressively unmasks tokens from a fully masked state, enabling bidirectional context during generation. The paradigm applies to domains where outputs are discrete token sequences: natural language~\citep{sahoo2024mdlm,lou2024sedd}, molecular strings~\citep{irwin2020zinc20}, music playlists~\citep{rajput2023recommender}, and protein sequences~\citep{alamdari2023evodiff,wang2024dplm}.

However, practical applications require generated outputs to satisfy property constraints (e.g., topical vocabulary in text, target molecular weight in molecule generation, or target genre composition in music playlists) while remaining close to the learned data distribution. This creates a fundamental tension between constraint satisfaction and distributional fidelity. 
Existing approaches address this trade-off only partially: predictor- or classifier-based guidance~\citep{nisonoff2025unlocking,schiff2025simple} requires training an auxiliary model per property, and constrained discrete diffusion~\citep{cardei2025cdd} relies on inner optimisation loops with $5$--$20\times$ computational overhead.

We propose \textbf{primal-dual guided decoding} for discrete diffusion, an inference-time method that formulates constrained generation as a KL-regularised optimisation problem and solves it online during denoising.
The method is based on the stochastic primal-dual approximation technique, where tokens are revealed out of order and decisions are made under evolving context. 
This enables constraint enforcement to be integrated directly into the sampling process, without requiring additional trained models or modifying the diffusion model itself. 

At each denoising step, we modify token log-probabilities via an additive, constraint-dependent bias derived from adaptive Lagrangian multipliers, updated by mirror descent based on constraint violation. The bias is the optimal KL-regularised projection of the constraint at each step, so the resulting sampling distribution stays as close as possible to the model's unconstrained distribution while satisfying the constraint. In contrast to approaches that rely on auxiliary predictors or inner optimisation loops, this yields a lightweight procedure that retains the computational profile of standard sampling.

We evaluate the approach on three domains: topical text steering, molecular generation, and playlist generation. A single algorithm, instantiated with domain-specific scoring functions, improves constraint satisfaction across all three while preserving the relevant per-domain quality metrics. The KL cost of constraint enforcement is exposed as a single, controllable knob: practitioners can trade fidelity for satisfaction along a smooth frontier without retraining or per-step inner-loop optimisation.

\section{Related Work}

\paragraph{Discrete diffusion models.}
%
D3PM~\citep{austin2021d3pm} introduced discrete diffusion with structured transition matrices. We build on MDLM~\citep{sahoo2024mdlm}, which reduces this to absorbing-state (masking) dynamics with a continuous-time variational bound; SEDD~\citep{lou2024sedd} proposed alternative score-entropy training objectives for the same setting. Absorbing-state models naturally support hard positional constraints via inpainting (clamping a token at a known position and denoising the remainder), but inpainting alone cannot express soft or global targets such as ``molecular weight at least $350$'' or ``at least 10 ocean-related words''. Addressing such global, expectation-based constraints is the focus of our method.

\paragraph{Constrained generation.}
%
For continuous diffusion, classifier guidance~\citep{dhariwal2021diffusion} and classifier-free guidance (CFG)~\citep{ho2021classifierfree} steer generation toward desired attributes, but require either a trained classifier or retraining with label dropout, and operate in continuous state spaces that do not apply to discrete tokens. 
In autoregressive text generation, constrained decoding enforces lexical or structural constraints via grid beam search~\citep{hokamp2017lexical}, dynamic beam allocation~\citep{post2018fast}, look-ahead heuristics~\citep{lu2022neurologic}, or future discriminators~\citep{yang2021fudge}. These approaches rely on left-to-right generation and therefore do not apply to the parallel, partially observed denoising process of diffusion models. 

\paragraph{Constrained discrete diffusion.}
%
\citet{nisonoff2025unlocking} proposed predictor guidance for discrete diffusion, modulating transition rates via a Bayesian likelihood ratio from a trained predictor. This approach requires training a separate predictor per property and typically handles one property at a time. \citet{schiff2025simple} introduced discrete classifier-based guidance (D-CBG), which guides an unconditional denoising network using a separately trained classifier.
\citet{cardei2025cdd} introduced Constrained Discrete Diffusion (CDD), which avoids additional training but performs an inner Gumbel-softmax optimisation loop at each step, incurring $5$--$20\times$ computational overhead. In contrast, our method integrates constraint handling into the sampling process via a closed-form logit bias derived from a Lagrangian formulation, requiring no auxiliary models or inner optimisation. This yields a lightweight procedure with native multi-objective support and convergence guarantees from online convex optimisation~\citep{balseiro2020dual}.

More broadly, existing approaches either rely on auxiliary models, incur additional optimisation cost, or assume sequential generation, whereas our method addresses all three limitations simultaneously.\vspace{-0.09in}

\section{Notation \& Preliminaries}

We consider sequences of length $L$ over a vocabulary $\mathcal{V}=\{1,...,V\}$, and we introduce a special mask token $\texttt{M}$ and define the extended vocabulary $\tilde{\mathcal{V}}=\mathcal{V}\cup \{\texttt{M}\}$.
We let $T$ denote the discrete \textit{time horizon} of the diffusion process~\citep{sahoo2024mdlm}. At each step $t$, 
the sequence $x_t\in\tilde{\mathcal{V}}^L$ of length $L$ represents a partially masked version of the data, where we let $x_t^\ell$ be the token at position $\ell$.

Let $q(x_0)$ denote the data distribution. 
We define a forward process that progressively masks tokens in $x_0\sim q(x_0)$, independently across positions, increasing the probability that each token is replaced by the mask token $\texttt{M}$.


The reverse diffusion process reconstructs the original sequence by iteratively unmasking tokens. 
At each step $t$, unmasked tokens are sampled from the learned parametric model distribution $p_{\theta}(x | x_t)$.  
Generation starts from an initial sample $x_T$, typically taken as a fully masked sequence ($x_T=\{\texttt{M}\}_{\ell=1}^L$), and proceeds towards $x_0$. The goal is to recover a sample from the data distribution $q(x_0)$ by progressively removing noise.

We rely on two standard assumptions in discrete diffusion.

\begin{assumption} (Markov property) \label{as:markov}
    The model follows Markov factorisation:
    \begin{align}
    p_{\theta}(x_{0:T}) = p_\theta(x_T) \prod_{t=1}^T p_\theta(x_{t-1} | x_t).
\end{align}
\end{assumption}

\begin{assumption}(Absorbing state)\label{as:absorbing}
    For any step $t=1,...,T$, token $j\in\mathcal{V}$ and sequence $x_t$, if a position is already unmasked ($x_t^\ell \neq \texttt{M}$), then:
\begin{align}
    \Pr(x_{t-1}^\ell = j | x_t) = 
    \begin{cases}
        0\;\;& \forall j\neq x_t^\ell\\
        1\;\;&\text{otherwise.}
    \end{cases}
\end{align}
\end{assumption}
In other words, once a token is unmasked, it remains fixed for the remainder of the reverse diffusion process.
These assumptions are standard in discrete diffusion models~\citep{sahoo2024mdlm, austin2021d3pm}.

These properties define the standard reverse diffusion process used for generation. In the unconstrained setting, tokens are sampled directly from $p_\theta(x_{t-1} | x_t)$ at each step. In this work, we aim to modify this sampling process to enforce global constraints on the generated sequence while remaining close to the model's learned distribution. In the next section, we formalise this objective and derive an inference-time method for constrained generation.

\section{Method: Primal-Dual Guided Denoising}\label{sec:method}

Building on the diffusion framework described above, we now consider the constrained generation setting. We assume access to a trained diffusion model $p_\theta(x_{0:T})$ approximating the original data distribution $q(x_0)$ under Assumptions~\ref{as:markov} and~\ref{as:absorbing}. Our goal is to modify the generation process so that sampled sequences satisfy additional constraints, while remaining as close as possible to the model's learned distribution.


To formalise this, we introduce a target distribution $r(x_0)$ that incorporates the desired constraints. Rather than sampling directly from $q(x_0)$, we aim at sampling from $r(x_0)$, which is constrained but remains close to $q(x_0)$. This corresponds to adjusting the reverse diffusion process at each step while preserving its overall structure. 

We impose a Markov factorisation on $r(x_{0:T})$, mirroring the structure of the original model:
\begin{align}
    r(x_{0:T}) = p_\theta(x_T) \prod_{t=1}^T r(x_{t-1} | x_t),
\end{align}
We assume a shared prior $r(x_T) = p_\theta(x_T)$, and adopt the absorbing state Assumption~\ref{as:absorbing} for $r(x_{0:T})$.

At each step of the reverse diffusion process, $r(x_t^{\ell}=j | x_{t+1})$ defines the probability of assigning token $j$ to position $\ell$ given the current partially observed sequence $x_{t+1}$. 
We also introduce $a_{\ell j}^{t}\in[0,1]$, the probability that position $\ell$ is unmasked and assigned token $j$ at step $t$.
Formally,
\begin{align}\label{eq:alloc}
    a_{\ell j}^{t} = \mathbb{E}_{r(x_{t+1})}\Big[\mathbf{1}_{\{x_{t+1}^\ell = \texttt{M}\}} r(x_t^\ell = j | x_{t+1})\Big].
\end{align}
In what follows, we derive how this constrained distribution translates into a simple modification of token probabilities during sampling.

\paragraph{Constraints.}
We consider linear constraints on the generated sequence $x_0$, defined through per-token contributions. 
For clarity, we present the method for a single constraint: the extension to multiple constraints is straightforward and evaluated in Appendix~\ref{app:mol-details}.

Each token $j$ contributes a non-negative amount $b_{\ell j}\geq 0$ when assigned to position $\ell$. For example, in topical text generation, $b_{\ell j}$ can indicate whether token $j$ belongs to a target vocabulary, while in molecular generation it can correspond to atomic mass contributions. The total contribution of a sequence is obtained by summing these per-token values across positions. Given a target $R\geq 0$, we aim to generate sequences whose expected total contribution meets this target. 
Formally, 
\begin{align}
    \sum_{\ell=1}^L\sum_{j=1}^V b_{\ell j} r(x_0^{\ell}=j)\geq R.\label{eq:constr_1}
\end{align}

\paragraph{Objective.}
Given the constraint formulation above, our goal is to construct a distribution $r(x_{0:T})$ that satisfies the constraint while remaining as close as possible to the original model distribution $p(x_{0:T})$. This leads to the following optimisation problem:
\begin{align}
    \min_{r(x_{0:T})\in \Delta}\;& D_{KL}\big(r(x_{0:T})\, \big|\big| \,p_\theta(x_{0:T})\big)\label{eq:objective_KL}\\
    \text{subject to  }& \sum_{\ell=1}^L\sum_{j=1}^V b_{\ell j} r(x_0^{\ell}=j)\geq R.\label{eq:constr_expectation}
\end{align}
Intuitively, the objective encourages the constrained distribution $r$ to stay close to the learned model $p_{\theta}$, while the constraint enforces that generated sequences achieve the desired target. 


Noticing that the KL-divergence is equal to the difference between the cross-entropy and the entropy, we can rewrite Objective (\ref{eq:objective_KL}) in terms of token-level decisions (full derivations in Appendix~\ref{app:offline_problem}):
\begin{align}
    \max_{r(x_{0:T})\in \Delta} &\sum_{t=0}^{T-1} \sum_{\ell=1}^L \sum_{j=1}^V \mathbb{E}_{r(x_{t+1})}\Big[ r(x_{t}^{\ell}=j|x_{t+1})\log p_\theta(x_{t}^{\ell}=j|x_{t+1})  \Big] + H(r(x_{0:T}))\label{eq:offline_final_obj}\\
    \text{subject to  }& \sum_{\ell=1}^L\sum_{j=1}^V b_{\ell j} r(x_0^{\ell}=j)\geq R.\label{eq:offline_final_constr}
\end{align}
This formulation makes explicit that the objective decomposes across time steps and token positions, aligning with the structure of the diffusion process.

The formulation above is an \textit{offline} optimisation problem over full trajectories $r(x_{0:T})$. 
In practice, however, generation proceeds sequentially during the reverse diffusion process, where at each step we observe the current state $x_t$ and must decide how to sample the next tokens. 

This motivates an online formulation: at each step $t$, the model log probabilities $\log p_\theta(x_{t}^{\ell}=j|x_{t+1})$ are observed and fixed, and we choose the sampling distribution $r(x_{t} | x_{t+1})$ accordingly.  
Using the absorbing-state property and the definition of $a_{\ell j}^{t}$, we can rewrite the objective as:
\begin{align}
     \max_{r(x_{0:T})\in \Delta} \sum_{t=0}^{T-1} \sum_{\ell=1}^L \mathbb{E}_{r(x_{t+1})}\Big[\sum_{j=1}^V \mathbf{1}_{\{x_{t+1}^\ell=\texttt{M}\}}r(x_{t}^{\ell}=j|x_{t+1})\log p_\theta(x_{t}^{\ell}=j|x_{t+1})\Big] + H(r(x_{0:T})).
\end{align}
Since the logits are observed at each step, they can be treated as constants. Using Equation~(\ref{eq:alloc}), we can further express the objective in terms of the unmasking probabilities $a_{\ell j}^{t}$, yielding:
\begin{align}
    \max_{a_{\ell j}^t\in \Delta} &\sum_{t=0}^{T-1} \sum_{\ell=1}^L \sum_{j=1}^V a_{\ell j}^{t}\log p_\theta(x_{t}^{\ell}=j|x_{t+1}) + H(a_{\ell}^{t})\label{eq:objective_full} \\
    \text{subject to }& \sum_{t=0}^{T-1}\sum_{\ell=1}^L\sum_{j=1}^V b_{\ell j} a_{\ell j}^{t}\geq R.\label{eq:constraint_full}
\end{align}
This yields an online optimisation problem over token-level sampling decisions. Importantly, this formulation shows that constrained generation can be implemented by modifying token-level sampling probabilities at each denoising step. We now derive how this takes the form of a simple logit bias using a primal-dual formulation.

\subsection{Primal-Dual Inference}

To solve the online optimisation problem (\ref{eq:objective_full})-(\ref{eq:constraint_full}), we aim to derive a sampling rule that modifies token probabilities at each denoising step in a way that enforces the constraint while remaining close to the model distribution. We achieve this using a primal-dual formulation.
We begin by dualising constraint (\ref{eq:constraint_full}), yielding:
\begin{align}
    \max_{a_{\ell j}^t\in \Delta} \Big[\sum_{t=0}^{T-1} \sum_{\ell=1}^L \sum_{j=1}^V a_{\ell j}^{t}\log p_\theta(x_{t}^{\ell}=j|x_{t+1}) + H(a_{\ell}^t) + \min_{\lambda \geq 0} \lambda \Big( \sum_{t=0}^{T-1}\sum_{\ell=1}^L\sum_{j=1}^V b_{\ell j} a_{\ell j}^t - R\Big) \Big].\label{eq:dualized}
\end{align}
For any fixed value of the Lagrangian multiplier $\lambda=\lambda_{t+1}$, the first-order optimality conditions yield the following sampling distribution:
\begin{align}
    a_{\ell j}^t = \frac{p_\theta(x_{t}^\ell=j|x_{t+1}) e^{ \lambda_{t+1} b_{\ell j}}}{\sum_{i=1}^V p_\theta(x_{t}^\ell=i|x_{t+1}) e^{ \lambda_{t+1} b_{\ell i}}}.\label{eq:sampling}
\end{align}
This corresponds to modifying the model probabilities by an exponential weighting with the dual-adjusted terms $\lambda_{t+1} b_{\ell i}$. 
This admits an intuitive energy-based interpretation. 
The dual term $\lambda_{t+1} b_{\ell i}$ acts as an external field on the unbiased model's $p_\theta$ energy landscape: it makes the constraint-favoring tokens ``energetically cheaper'', guiding the reverse process to these new low energy states.


This shows that constrained generation can be implemented by simply adjusting the log probabilities at each denoising step, without altering the model itself. Intuitively, tokens that contribute more toward satisfying the constraint are upweighted, while others are relatively suppressed.

As generation proceeds, the degree of constraint satisfaction may change.
Therefore, the multiplier $\lambda$ must be updated accordingly. 
We perform this update using mirror descent (derivation in Appendix \ref{app:mirror_descent}):
\begin{equation}\label{eq:mirror}
    \lambda_t = \lambda_{T} \exp\Big\{ -\eta \sum_{t=0}^{T-1}\sum_{\ell=1}^L\sum_{j=1}^V \Big(b_{\ell j}-\frac{R}{L}\Big) a_{\ell j}^t \Big\}.
\end{equation} 
In words, the update tracks the cumulative constraint satisfaction during generation. When the constraint is under-satisfied, the multiplier increases, strengthening the bias toward high-contribution tokens; when the constraint is satisfied or exceeded, the multiplier decreases, reducing interference with the model distribution.

Algorithm~\ref{alg:pd} summarises the resulting procedure. Starting from a fully masked sequence $x_T$, we iteratively perform the reverse diffusion steps. At each step, we first compute the modified sampling probabilities using Equation~(\ref{eq:sampling}), then sample tokens for the masked positions, update the running constraint satisfaction, and adjust the Lagrangian multiplier using Equation~(\ref{eq:mirror}). This yields a simple inference-time procedure that integrates constraint enforcement directly into the denoising process through a logit-level modification of the sampling distribution.
Importantly, this modification is applied entirely at inference time and does not require retraining or additional model evaluations.


\begin{algorithm}[t]
\caption{Primal-Dual Guided Denoising for MDLM}
\label{alg:pd}
\begin{algorithmic}[1]
\REQUIRE Model $p_\theta(x_{0:T})$, parameters $(R, \eta, b_{\ell j})$, \#steps $T$, initial $\lambda_T$
\STATE $g_T \leftarrow 0$;\quad $x_T \leftarrow \texttt{[M]}^{L}$
\FOR{$t = T-1, \ldots, 0$}

    
    \STATE $a_{\ell j}^t \leftarrow p_\theta(x_{t} | x_{t+1}) e^{\lambda_{t+1} b_{\ell j}}\;\;\;\;\;\forall \ell=1,...,L, \;\;\forall j=1,...,V$ \hfill {\small\textit{forward pass}}

    
    \STATE $a_{\ell j}^t \leftarrow a_{\ell j}^t / \sum_{i=1}^V a_{\ell i}^t$ \hfill {\small\textit{see} Eq. (\ref{eq:sampling})}
    
    \STATE Sample $x_t^\ell \sim Cat(\cdot \mid a_\ell^t)$ for all $\ell=1,...,L$; update masked positions
    

    \STATE $g_t \leftarrow g_{t+1} + \sum_{\ell=1}^L \Big(b_{\ell, x_t^\ell}-\frac{R}{L}\Big) \mathbf{1}_{x_t^\ell \neq \texttt{[M]} \;\text{\&}\; x_{t+1}^\ell = \texttt{[M]}}$ \hfill {\small\textit{slack update}}

    \STATE $\lambda_t \leftarrow \lambda_{T} \exp\Big\{ -\eta g_{t} \Big\}$ \hfill {\small\textit{see}  Eq. (\ref{eq:mirror})}

\ENDFOR
\RETURN $x_0$
\end{algorithmic}
\end{algorithm}

\subsection{Theoretical Analysis} 
Having introduced the primal-dual decoding algorithm, we now turn to its theoretical analysis.
In particular, we bound the constraint violation reached by Algorithm~\ref{alg:pd} and 
characterise the resulting trade-off between constraint satisfaction and approximation error. 
We also discuss the computational complexity of the proposed algorithm.

\begin{restatable}{statement}{OnlineAlgCV}\label{th:cv}
    If Problem (\ref{eq:objective_full})-(\ref{eq:constraint_full}) is feasible, then for any $\eta>0$, the expected constraint violation of Algorithm \ref{alg:pd} is upper-bounded by $\mathcal{O}\Big(\frac{1}{\eta} \log \frac{\eta+1}{b_{max}-\frac{R}{L}} \Big)$.
\end{restatable}
\begin{proof}
    We provide the proof in Appendix \ref{app:cv_bounds}.
\end{proof}
This result shows that increasing $\eta$  tightens the bound on constraint violation. Intuitively, from Equation~(\ref{eq:mirror}), a larger $\eta$ leads to stronger updates the Lagrangian multiplier $\lambda_{t+1}$ when the constraint is not satisfied, which increases the sampling probability of tokens with higher contribution $b_{\ell j}$.

We now introduce a $\textit{weak temporal consistency}$ assumption that models how token-level log probabilities evolve across denoising steps. 
In particular, we decompose the logits produced by the diffusion model into a stable component and a small stochastic drift: 
\begin{assumption}\label{as:temporal_consistency}
    For any token $j\in\mathcal{V}$, step $t=T-1,...,0$, position $\ell=1,...,L$ and sequence $x_{t+1}$, we assume
    \begin{align}
        \log p_\theta(x_t^\ell=j | x_{t+1}) = \log p_\theta(x_{t+1}^\ell=j | x_{t+2}) + \epsilon_{t j \ell},
    \end{align}
    where $\epsilon_{tj\ell}\sim\mathcal{N}(\mu_{tj\ell},\sigma^2)$, with bounded covariance $Cov(\epsilon_{tj\ell}, \epsilon_{\tau jk})\leq\rho$ 
    for all $(t,\ell)\neq (\tau, k)$, and bounded drift $|\mu_{tj\ell}|\leq \bar{\mu}$.
\end{assumption} 
Intuitively, this assumption states that the model's log probabilities evolve smoothly across denoising steps, with changes captured by a small stochastic perturbation. This reflects the fact that predictions become progressively more refined as the reverse diffusion process unfolds, rather than changing abruptly. Similar temporal consistency assumptions have been used in the analysis of sequential generative models, including LLMs~\citep{maystre2025incrementalsequenceclassificationtemporal}.


\begin{restatable}{statement}{OnlineAlgRegret}\label{th:regret}
    The regret of Algorithm \ref{alg:pd} is upper bounded by 
    \begin{align}\label{eq:theorem_regret}
        Regret_\epsilon(\mathcal{A}) \leq \frac{\lambda_T}{\eta} e^{\eta L T \max_{\ell,j}|b_{\ell j} - R/L|}+C_\delta,
    \end{align}
    where $C_\delta = \bar{\mu} LT(T+1) + \sqrt{4(\sigma^2 + LT\rho) \log\frac{2LVT(VT+1)^L}{\delta} (LT+L^2T^3)}$ with probability at least $1-\delta$.
\end{restatable}
\begin{proof}
    We provide the proof in Appendix~\ref{app:reg_bounds}.  
\end{proof} 
This bound makes explicit the trade-off between constraint satisfaction and approximation error. In particular, increasing the step size $\eta$ tightens the constraint violation bound (Statement \ref{th:cv}), but leads to a larger regret term in Equation~(\ref{eq:theorem_regret}). This reflects the tension between enforcing constraints aggressively and remaining close to the original model distribution. This trade-off is reflected empirically in Section~\ref{sec:experiments}, where $\eta$ controls the balance between constraint satisfaction and distributional fidelity.



The second term $C_\delta$ captures stochastic variability of the model logits, as described in Assumption \ref{as:temporal_consistency}. When the variance $\sigma^2$, covariance $\rho$, and drift $\bar{\mu}$ are small, this term decreases, tightening the regret bound. We provide an empirical study of Assumption \ref{as:temporal_consistency} and its effect on the bounds in Appendix~\ref{app:text-temporal-consistency}.

Finally, the proposed primal-dual decoding adds only constant-time operations to the inference loop of the unconstrained model (lines 3-4, 6-7 of Algorithm~\ref{alg:pd}), and therefore preserves the computational complexity of standard discrete diffusion models~\citep{sahoo2024mdlm}. 

\subsection{Extensions: Lagrangian Update Regimes}\label{sec:method-lambda}
While Constraint (\ref{eq:constraint_full}) enforces a global target $R$ over the full sequence of length $L$, the Lagrangian update in Equation~(\ref{eq:mirror}) 
distributes this target uniformly across positions via the per-position term 
$R/L$. This provides a stable normalised signal during generation, but may be suboptimal when constraint satisfaction is unevenly distributed across tokens or steps. 

To address this, we consider alternative ways of measuring constraint satisfaction during the reverse diffusion process, leading to different update regimes for the Lagrangian multiplier. Specifically, we study three variants:
(1) \emph{Accumulated} slack follows Equation~(\ref{eq:mirror}), aggregating constraint contributions over time and providing conservative, stable target tracking. 
(2) \emph{Instantaneous} slack compares the accumulated contribution directly to the global target $R$ , rather than the per-position target $R/L$, yielding 
$\lambda_t=\lambda_{T} \exp\{ -\eta (\sum_{\tau=t}^{T-1}\sum_{\ell=1}^L\sum_{j=1}^V b_{\ell j} a_{\ell j}^\tau -R)\}$. 
Finally, (3) \emph{Optimistic slack}, based on Optimistic Mirror Descent (OMD)~\citep{rakhlin2013online}, incorporates a prediction of future constraint satisfaction:
\begin{equation}\label{eq:optimistic}
    \lambda_t = \lambda_T \exp\{-\eta \sum_{\tau=t}^{T-1} \Big[\Big( \sum_{\ell=1}^L\sum_{j=1}^V b_{\ell j} a_{\ell j}^\tau -R \Big) -M_\tau + M_{\tau+1}\Big] \},
\end{equation}
where $M_\tau = R - \sum_{\ell=1}^L\sum_{j=1}^V p_\theta(x_\tau^\ell = j | x_{\tau+1}) b_{\ell j} \mathbf{1}_{\{x_\tau^\ell = \texttt{[M]}\}}$ is the expected future shortfall estimated from the model's predictions.


Intuitively, accumulated slack provides stable but conservative updates, instantaneous slack enforces constraints aggressively, and optimistic slack balances the two by anticipating future contributions.

\section{Experiments}\label{sec:experiments}

We evaluate primal-dual guided decoding on three structurally different discrete-sequence domains: a sparse counting constraint (topical text), a dense additive property under hard syntactic constraints (molecular generation), and a noisy indirect cluster-score constraint (playlist generation). For each domain we train a single MDLM backbone once and keep it fixed across all diffusion methods. 

Table~\ref{tab:unified} reports two cross-domain metrics. \emph{Pass\%} measures constraint satisfaction (e.g., $\geq R$ ocean tokens for text, MW $\geq 350$ for molecular generation, and fraction of generated tracks tagged with the target subgenre for playlist). \emph{KL} measures distributional divergence as the unigram KL between constrained and unconstrained samples from the same model. Together, these metrics capture the trade-off between constraint satisfaction and fidelity to the original model distribution.
Domain-specific quality metrics and full hyperparameter sweeps are provided in Appendix~\ref{sec:extended-results}.

We compare against three diffusion baselines that share the same backbone: \emph{Static logit bias} (a fixed scalar $\alpha$ added to per-token scores at every step), \emph{D-CBG}~\citep{schiff2025simple} (discrete classifier-based guidance using a separately trained property classifier), and \emph{CDD}~\citep{cardei2025cdd} (training-free with an inner Gumbel-softmax loop per step). The appendix tables additionally include comparisons with \emph{GPT-4.1-mini}~\citep{achiam2023gpt4} on text and molecular, and zero-shot transfer of SPDD to \emph{LLaDA-8B}~\citep{nie2024llada} on text. Each method is reported at a representative operating point; the satisfaction--KL trade-off is monotonic in each method's primary control parameter, so no held-out tuning set is required.

\begin{table}[t]
\centering
\caption{Comparison between SPDD and baselines across three datasets, with relative inference cost.}
\label{tab:unified}
\begin{tabular}{l cc cc cc cc}
\toprule
 & \multicolumn{2}{c}{Text} & \multicolumn{2}{c}{Molecular} & \multicolumn{2}{c}{Playlist} & \multicolumn{2}{c}{Cost} \\
\cmidrule(lr){2-3} \cmidrule(lr){4-5} \cmidrule(lr){6-7} \cmidrule(lr){8-9}
Method & Pass\% & KL$\downarrow$ & Pass\% & KL$\downarrow$ & Pass\% & KL$\downarrow$ & Sample$\times$ & Train \\
\midrule
Unconstrained       & \phantom{0}0.50\% &     --- & 29.40\% &     --- & 1.14\% &     --- & 1.00$\times$ & none \\
Static bias         & 81.90\% & 0.706 & 41.40\% & 0.019 & 1.72\% & 0.003 & 1.03$\times$ & none \\
D-CBG               & \phantom{0}0.00\% & 0.127 & 42.80\% & 0.035 & 1.92\% & 0.115 & 1.09$\times$ & classifier \\
CDD                 & 38.00\% & 0.221 & 42.20\% & 0.034 & 9.18\% & 0.779 & 7.28$\times$ & none \\
\midrule
SPDD (Ours)         & 92.10\% & 0.420 & 48.50\% & 0.096 & 9.28\% & 0.355 & 1.03$\times$ & none \\
\bottomrule
\end{tabular}
\end{table}

SPDD reaches the highest Pass\% on every domain. On text it more than doubles CDD's satisfaction; on molecular it lifts the upper-tail acceptance several points beyond the strongest baseline; on playlist it matches CDD's track-level satisfaction at less than half the KL, while preserving substantially higher token diversity (Appendix~\ref{app:playlist-exp-details}). The corresponding KL is the cost of pushing further into the constrained region, and is itself a controllable knob: $\eta$ traces a smooth frontier between satisfaction and fidelity (Appendix~\ref{sec:extended-results}). These gains come at near-baseline sampling speed and without auxiliary classifier training. The remainder of this section analyses the domain-specific mechanisms underlying these results.

\paragraph{Text results.}\label{sec:text}

We evaluate topical steering in text generation by generating children's stories enriched with ocean-related vocabulary, without retraining the underlying model. This tests whether the method can enforce sparse, global lexical constraints while preserving fluency and style. We train a 142M-parameter MDLM on 50k stories from TinyStories~\citep{eldan2023tinystories} (BPE tokenizer, vocabulary 8{,}192, sequence length 256), and constrain generation to include at least $R{=}10$ ocean-related tokens per story, drawn from a set of 18 target words (e.g., \emph{ocean, whale, dolphin}).

Static bias improves satisfaction but overshoots, as it cannot reduce pressure once the constraint is satisfied. CDD improves over unconstrained sampling but requires an inner optimisation loop. D-CBG underperforms on this sparse target set: its Taylor-approximated classifier gradient is too smooth to produce a sharp bias over the 18 target words, redistributing mass within the non-ocean vocabulary without raising any target token above the sampling threshold, so Pass\% remains near the unconstrained level. In contrast, the primal-dual formulation adapts its multiplier to the accumulated constraint deficit, allowing SPDD to increase satisfaction while staying much closer to the unconstrained distribution.

The same mechanism transfers zero-shot to LLaDA-8B without fine-tuning, with the caveat that the larger backbone exhibits distinct qualitative artefacts (Appendix~\ref{app:llada}).

The slack definition controls how aggressively $\lambda$ responds to the running constraint deficit. Figure~\ref{fig:lambda_trace} illustrates the dynamics of the accumulated slack used in the main experiments: $\lambda$ rises as the sample falls behind its target and decreases as new ocean tokens reduce the deficit. Sweeping the slack mode (see Section~\ref{sec:method-lambda}) and $\eta$ traces a continuum between conservative low-KL guidance and more aggressive high-satisfaction guidance (Appendix~\ref{app:text-slack}).

\begin{figure}[t]
\centering
\includegraphics[width=0.7\linewidth]{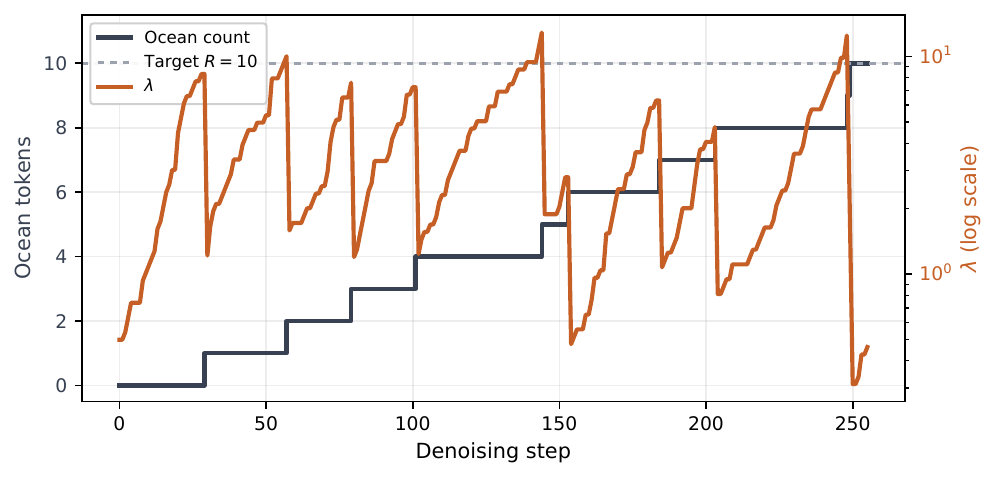}
\caption{Accumulated-slack multiplier dynamics for a single TinyStories sample ($\eta\!=\!2.0$, target $R\!=\!10$). Ocean-token count and the target are shown on the left axis; the coloured curve shows $\lambda$ on the right axis, rising as the accumulated deficit grows and stepping down when newly committed ocean tokens reduce it. The sample reaches the target by the end of denoising.}
\label{fig:lambda_trace}
\end{figure}

\paragraph{Molecular results.}\label{sec:mol}

We evaluate constrained molecular generation: given a diffusion model trained on drug-like molecules, we steer generation toward higher molecular weight (MW) while preserving chemical validity and diversity. This tests whether the method can control a global numerical property without breaking structural constraints. We train a 21.9M-parameter MDLM on 1M molecules from ZINC20~\citep{irwin2020zinc20}, represented as SMILES strings, and enforce MW~$\geq 350$ using atomic-mass per-token scores. The above-target threshold is well into the upper tail of the unconstrained MW distribution (mean MW $\approx 330$), so the constraint exercises non-trivial steering.

In this domain the unconstrained model already places non-negligible mass above the target, so the question is how much further guidance can shift the distribution without breaking chemistry. SPDD attains the highest Pass\% in the column while keeping validity above 50\%. D-CBG and CDD move closer to the threshold but plateau several points below SPDD; static logit bias becomes competitive only at high $\alpha$, where the lack of an adaptive feedback loop already pushes mean MW well past the target. The KL of SPDD is correspondingly larger because it shifts more mass into the constrained region; the satisfaction--KL frontier is exposed via $\eta$ (Appendix~\ref{app:mol-details}).

\paragraph{Playlist results.}\label{sec:playlist}

We apply SPDD to music playlist generation, steering a diffusion model to complete playlists with tracks from a target subgenre. The model generates discrete tokens mapped back to tracks. Each token is an LSH cluster over audio embeddings, and its genre score is the fraction of cluster members tagged with the target genre.

We evaluate playlists in a held-out completion setting: the first 20 session tracks are clamped and the model generates the remaining 30. Pass\% measures track-level subgenre satisfaction, the fraction of generated tracks tagged with the target subgenre.

The playlist signal is sparse and indirect: each token is an LSH cluster whose members only probabilistically exhibit the target subgenre, so guidance must amplify a weak per-token signal into a non-trivial output-distribution shift. Static is essentially passive. D-CBG, despite a trained subgenre classifier, barely lifts track-level satisfaction: its gradient is too smooth to raise any cluster's probability above the sampling threshold, the same effect we observed on the sparse text constraint. CDD achieves non-trivial track-level satisfaction but at the cost of substantial token-diversity collapse. SPDD improves on CDD satisfaction level at less than half the KL while keeping token diversity much closer to the unconstrained baseline; the remaining trade-off is on relevance to the user's natural continuation, which any meaningful steering of the distribution must shift. 

This trade-off is itself controllable: varying $\eta$ moves SPDD along the satisfaction--KL frontier (Appendix~\ref{app:playlist-exp-details}), letting the user pick a point that fits a target fidelity or satisfaction budget. In contrast, CDD's $\tau$ does not provide a comparable increase in satisfaction in our sweep, and D-CBG's $\gamma$ saturates at the classifier's decision boundary.
A consistent pattern emerges across the three domains. When the constraint signal is sparse, indirect, or mediated by noisy token clusters, the primal-dual update provides the most favourable trade-off: it improves satisfaction while preserving the base sampler's computational profile and bounding distributional fidelity through $\eta$.

\section{Discussion}

The central contribution of primal-dual guided decoding is \emph{enabling constrained generation with minimal distributional divergence}. The additive logit bias arises as the optimal solution to a KL-regularised Lagrangian (Equation~\ref{eq:sampling}), so guidance operates directly in the model's probability space rather than through re-prompting, auxiliary predictors, or inner optimisation. Empirically, SPDD keeps constrained samples close to each model's unconstrained distribution across text, molecular, and playlist generation (Table~\ref{tab:unified}). The comparisons clarify the trade-offs of the alternatives: CDD is a training-free option when additional inner-loop computation is acceptable, and D-CBG offers a classifier-driven path when a property classifier is already trained. SPDD complements these methods by targeting decomposable constraints with an adaptive, low-cost update. The formal violation bound (Statement~\ref{th:cv}) makes the satisfaction--fidelity trade-off explicit through $\eta$.

The framework is lightweight: it requires no retraining, no auxiliary model, and only a per-token score vector. In the main experiments we use the accumulated slack update, which follows the mirror-descent derivation directly; alternative slack definitions are evaluated in the appendix.

\paragraph{Limitations.}
Our experiments focus on absorbing-state diffusion, where early token commitments are not revisited, making slack design and $\eta$ scheduling most important in early denoising steps (Table~\ref{tab:ocean-slack}). Extending the same primal-dual principle to non-absorbing discrete diffusion is a promising direction for enabling later corrections. For non-decomposable constraints (e.g., QED), improved partial-sequence slack estimators could enhance early updates, and very large backbones may benefit from standard repetition controls or adaptive $\lambda_0$ schedules. Finally, the method provides probabilistic constraint bounds (Statement~\ref{th:cv}); applications requiring strict feasibility may combine it with rejection sampling or post-hoc checks.

\newpage
\bibliographystyle{plainnat}
\bibliography{references}

\appendix
\section{Additional Proofs}\label{app:proofs}

\subsection{Chain Rule for KL divergence, Equation~(\ref{eq:chain_KL})}\label{app:chain_rule_KL}
\begin{align} 
    D_{KL}\Big( &p(x_T) \prod_{t=1}^T r(x_{t-1} | x_t)\, \Big|\Big| \,p(x_T) \prod_{t=1}^T p_\theta(x_{t-1} | x_t)\Big) = \nonumber\\ 
    &\sum_{x_0,...,x_T} p(x_T) \prod_{t=1}^T r(x_{t-1} | x_t) \log \frac{ \prod_{t=1}^T r(x_{t-1} | x_t)}{\prod_{t=1}^T p_\theta(x_{t-1} | x_t)}=\nonumber\\
    &\sum_{x_0,...,x_T} p(x_T) \prod_{t=1}^T r(x_{t-1} | x_t) \sum_{t=1}^T \log \frac{r(x_{t-1}|x_t)}{p_\theta(x_{t-1}|x_t)} = \nonumber\\
    &\sum_{x_T} p(x_T) \underbrace{\sum_{x_{T-1}} r(x_{T-1}|x_T)\log \frac{r(x_{T-1}|x_T)}{p_\theta(x_{T-1}|x_T)}}_{D_{KL}\big(r(x_{T-1}|x_T) || p_\theta(x_{T-1}|x_T)\big)}\underbrace{\sum_{x_{0:T-2}}\prod_{t=1}^{T-1} r(x_{0:T-2}|x_{T-1})}_{=1} + \nonumber\\
    \sum_{x_{T-1:T}} & r(x_{T-1}|x_T)p(x_T) \underbrace{\sum_{x_{T-2}} r(x_{T-2}|x_{T-1})\log \frac{r(x_{T-2}|x_{T-1})}{p_\theta(x_{T-2}|x_{T-1})}}_{D_{KL}\big(r(x_{T-2}|x_{T-1}) || p_\theta(x_{T-2}|x_{T-1})\big)}\underbrace{\sum_{x_{0:T-3}} \prod_{t=1}^{T-2} r(x_{0:T-3}|x_{T-2})}_{=1}+ ...= \nonumber\\ 
    \sum_{t=0}^{T-1} & \mathbb{E}_{r(x_{t+1})}\Big[ D_{KL}(r(x_{t}|x_{t+1}) || p_\theta(x_{t}|x_{t+1}))\Big].
\end{align}
  
\subsection{Derivation of Equations (\ref{eq:offline_final_obj})-(\ref{eq:offline_final_constr})}\label{app:offline_problem}
We can re-write the KL-divergence term in Objective (\ref{eq:objective_KL}) as follows:
\begin{align}
    D_{KL}\big(r(x_{0:T})\, \big|\big| \,p_\theta(x_{0:T})\big) = & D_{KL}\Big(p(x_T) \prod_{t=0}^{T-1} r(x_{t} | x_{t+1})\, \Big|\Big| \,p(x_T) \prod_{t=0}^{T-1} p_\theta(x_{t} | x_{t+1})\Big) =\\
    &\sum_{t=0}^{T-1} \mathbb{E}_{r(x_{t+1})} D_{KL}\Big( r(x_{t} | x_{t+1})\, \big|\big| \, p_\theta(x_{t} | x_{t+1})\Big)=\label{eq:chain_KL}\\
    &\sum_{t=0}^{T-1} \sum_{\ell=1}^L \mathbb{E}_{r(x_{t+1})} D_{KL}\Big( r\big(x_{t}^{\ell} | x_{t+1}\big)\, \big|\big| \, p_\theta\big(x_{t}^{\ell} | x_{t+1}\big)\Big),\label{eq:independence}
\end{align}
where Equation~(\ref{eq:chain_KL}) follows from the chain rule for KL divergence (see Appendix \ref{app:chain_rule_KL}) and Equation~(\ref{eq:independence}) follows from independence of sampling across positions $\ell=1,...,L$ in the generated sequences.
Expanding the KL-term we can further simplify this optimisation objective as follows:
\begin{align}
    \min_{r(x_{0:T})\in \Delta}\; &D_{KL}\big(r(x_{0:T})\, \big|\big| \,p_\theta(x_{0:T})\big) =\nonumber \\
    &\min_{r(x_{0:T})\in \Delta}\; \sum_{t=0}^{T-1} \sum_{\ell=1}^L \mathbb{E}_{r(x_{t+1})} \sum_{j=1}^V r(x_{t}^{\ell}=j|x_{t+1})\log\Bigg( \frac{r(x_{t}^{\ell}=j|x_{t+1})}{p_\theta(x_{t}^{\ell}=j| x_{t+1})}\Bigg)=\nonumber\\
    &\max_{r(x_{0:T})\in \Delta} \sum_{t=0}^{T-1} \sum_{\ell=1}^L \mathbb{E}_{r(x_{t+1})}\Big[\sum_{j=1}^V r(x_{t}^{\ell}=j|x_{t+1})\log p_\theta(x_{t}^{\ell}=j|x_{t+1})  \Big] + H(r(x_{0:T})).\label{eq:obj_entropy}
\end{align}

\subsection{Mirror Descent}\label{app:mirror_descent}
To derive the update rule for $\lambda_t$ we rely on the mirror descent applied to the inner minimisation problem in Equation~(\ref{eq:dualized}).
In particular, we let
\begin{align}\label{eq:mirror_desc}
    \lambda_t = \arg\min_{\lambda\geq 0}\Big\{\sum_{t=0}^{T-1}\sum_{\ell=1}^L\sum_{j=1}^V \big(b_{\ell j}-\frac{R}{L}\big) a_{\ell j}^t + D_h(\lambda||\lambda_{t+1})\Big\},
\end{align}
where $D_h(\lambda||\lambda_{t+1})$ is the Bregman divergence that is induced by the negative entropy function $h(\lambda) = \lambda \ln\lambda - \lambda$.
Solving Problem (\ref{eq:mirror_desc}) we obtain:
\begin{align}
    \lambda_t = \lambda_{t+1} \exp\Big\{ -\eta \sum_{\ell=1}^L\sum_{j=1}^V \big(b_{\ell j}-\frac{R}{L}\big) a_{\ell j}^t \Big\},
\end{align} 
where $\eta>0$ is a calibration step-size parameter.
Equivalently, we obtain:
\begin{align}
    \lambda_t = \lambda_{T} \exp\Big\{ -\eta \sum_{t=0}^{T-1}\sum_{\ell=1}^L\sum_{j=1}^V \big(b_{\ell j}-\frac{R}{L}\big) a_{\ell j}^t \Big\}.
\end{align} 

\subsection{Expected Regret Analysis}\label{app:reg_bounds}
In this section, we derive an upper bound on the optimal reward. 
To this end, we start with analysing the stochastic process defined in Assumption \ref{as:temporal_consistency}.
Specifically, from Assumption \ref{as:temporal_consistency} we can write:
\begin{align}\label{eq:logits_process}
        \log p_\theta (x_t^\ell = j | x_{t+1}) = \log p_\theta(x_T^\ell = j) + \sum_{\tau=t}^{T-1} \epsilon_{\tau j\ell},\;\;\;\forall t=1,...,T-1, \ell=1,...,L, j\in\mathcal{V},
\end{align}
where the logits computed at time step $t$ of the reverse diffusion process are modelled as a stochastic process that steers the prior logits $\log p_\theta(x_T^\ell = j)$ with a Gaussian process $\sum_{\tau=t}^{T-1} \epsilon_{tj\ell}$.
We decompose $\epsilon_{tj\ell} = \mu_{tj\ell} + \eta_{tj\ell}$ into the random noise $\eta_{tj\ell}\sim\mathcal{N}(0,\sigma^2)$ and the constant average drift $\mu_{tj\ell}\in \mathbb{R}$. 

Now, let $\mathcal{H}_{tj\ell} = \sum_{\tau=t}^{T-1} \eta_{\tau j\ell}$ and $\mathcal{M}_{tj\ell} = \sum_{\tau=t}^{T-1} \mu_{\tau j\ell}$ be the cumulative random and the drift components reached by step $t$ of the reverse process. 
Notice, that $\mathcal{H}_{tj\ell}$ is a zero-mean process for $t=T-1,...,0$, s.t.,
\begin{align}
    Var(\mathcal{H}_{tj\ell}) \leq (T-t)\sigma^2 + (T-t)(T-t-1)\rho, 
\end{align}
and
\begin{align}
    Cov(\mathcal{H}_{tj\ell}, \mathcal{H}_{tjk}) \leq (T-t)^2\rho\;\;\;\forall \ell\neq k, \forall t=1,...,T, \forall j\in\mathcal{V}.
\end{align}
Furthermore, using the fact that $\mu_{tj\ell} \leq \bar{\mu}$ (see Assumption \ref{as:temporal_consistency}) we can bound 
\begin{align}\label{eq:bound_drift}
    \mathcal{M}_{tj\ell}\leq \bar{\mu}\cdot (T-t).
\end{align}

We let $Reward(a, \epsilon)$ be the reward reached by the randomised allocation $a\sim a^*(\epsilon)$ drawn from the optimal policy $a^*(\epsilon)$ given randomness $\epsilon$ defined in Assumption \ref{as:temporal_consistency}.
Observe, that the expected reward $\mathbb{E}_{a\sim a^*(\epsilon)} [Reward(a, \epsilon)\, |\, \epsilon\,]$ reached with this randomised allocation $a$ is itself a random variable as it depends on $\epsilon$. 
We will, therefore, derive an upper bound for this expected reward that must hold with high probability given the randomness generated by $\epsilon$.
\begin{statement}\label{th:reward_OPT}
    Expected reward $\mathbb{E}_{a\sim a^*(\epsilon)} [Reward(a, \epsilon)\, |\, \epsilon\,]$ of the optimal allocation in Problem (\ref{eq:objective_full})-(\ref{eq:constraint_full}) is upper bounded by 
    \begin{align}\label{th:opt_bound}
        \mathbb{E}_{a\sim a^*(\epsilon)} [Reward(a, \epsilon)\, |\, \epsilon\,] \leq \min_{\lambda\geq 0}\; & T \sum_{\ell=1}^L \log\sum_{i=1}^V p_\theta (x_T^\ell=i) e^{\lambda(b_{\ell i} - R/L)}\nonumber\\
    &  + \bar{\mu} LT + \sqrt{2LT(\sigma^2 + LT\rho)( \log\frac{1}{\delta} + L\log(VT+1))}
\end{align}
with probability at least $1-\delta$.
\end{statement}
\begin{proof}
From Equations~(\ref{eq:objective_full}) and (\ref{eq:logits_process}) we obtain:
\begin{align}
    \sum_{t=0}^{T-1}\sum_{\ell=1}^L\sum_{j=1}^V a_{\ell j}^t \log p_\theta (x_t^\ell = j | x_{t+1}) = &\sum_{t=0}^{T-1}\sum_{\ell=1}^L\sum_{j=1}^V a_{\ell j}^t\log p_\theta(x_T^\ell = j) +\nonumber \\
    &\underbrace{\sum_{t=0}^{T-1}\sum_{\ell=1}^L\sum_{j=1}^V a_{\ell j}^t \mathcal{M}_{tj\ell}}_{\text{systematic drift}} + \underbrace{\sum_{t=0}^{T-1}\sum_{\ell=1}^L\sum_{j=1}^V a_{\ell j}^t \mathcal{H}_{tj\ell}}_{\text{random component}}.
\end{align}
Remember that in the MDLM setting once the token is allocated (i.e., $a_{\ell j}^t=1$), it stays fixed until the end of generation, i.e.,
\begin{align}\label{eq:feasibility}
    \sum_{t=0}^{T-1} \sum_{j=1}^V a_{\ell j}^t \leq 1\;\;\;\;\forall \ell=1,...,L.
\end{align}
Therefore, using Equations (\ref{eq:bound_drift}) and (\ref{eq:feasibility}) the systematic drift can be upper bounded as follows:
\begin{align}
    \sum_{t=0}^{T-1}\sum_{\ell=1}^L\sum_{j=1}^V a_{\ell j}^t \mathcal{M}_{tj\ell} \leq \sum_{t=0}^{T-1}\sum_{\ell=1}^L\sum_{j=1}^V a_{\ell j}^t\bar{\mu}\cdot (T-t)\leq \bar{\mu}LT.
\end{align}

Now, for any position $\ell$ let $\mathcal{H}_{\ell t}=\sum_{j=1}^V a_{\ell j}^t \mathcal{H}_{tj\ell}\sim \mathcal{N}\big(0,\, Var(\mathcal{H}_{\ell t})\big)$, where
\begin{align}
    Var(\mathcal{H}_{\ell t}) = \sum_{i,j=1}^V  a_{\ell j}^t a_{\ell i}^t Cov(\mathcal{H}_{tj\ell}, \mathcal{H}_{ti\ell})
    = \sum_{j=1}^V a_{\ell j}^t Var(\mathcal{H}_{tj\ell})\leq (T-t)\sigma^2 + (T-t)(T-t-1)\rho.\nonumber
\end{align}
Here, we used the fact that we can allocate at most one token to position $\ell$, i.e., $\sum_{i=1}^V a_{\ell i}^t \leq 1$.
Now, observe that $\mathcal{H} = \sum_{t=0}^{T-1} \sum_{\ell=1}^L \mathcal{H}_{\ell t}$ is Gaussian with zero mean.
Remember, that since only a single token $j$ can be allocated to a specific position $\ell$ (Equation~(\ref{eq:feasibility})), then for any fixed feasible allocation $a_{\ell j}^t$ we must have:
\begin{align}
    \mathcal{H} = \sum_{\tau=0}^{T-1} \sum_{\ell=1}^L \sum_{j=1}^V b_{\ell j}^{\tau}(a) \eta_{\tau j \ell},
\end{align}
where $b_{\ell j}^{\tau}(a) = \sum_{t=1}^\tau a_{\ell j }^t$ indicates whether token $j$ is allocated to position $\ell$ by time step $\tau$ (in this case, all the following random contributions $\eta_{\tau j\ell}$ must have non-zero coefficients).
Noticing that $\sum_{\tau, j, \ell} b_{\ell j}^\tau(a) = \sum_{t\ell j} a_{\ell j}^t (T-t)\leq L(T-1)$ we can bound:
\begin{align}
    Var(\mathcal{H}) \leq LT\sigma^2 + L^2T^2\rho.\nonumber
\end{align}
Therefore, using the concentration inequality we can bound:
\begin{align}
    \Pr( \mathcal{H} \geq \delta) \leq \exp\Big\{ -\frac{\delta^2}{2 LT(\sigma^2 + LT\rho)} \Big\}.\nonumber
\end{align}

Notice that we want to bound $\mathcal{H}$ for any possible allocations.
As $\mathcal{H}$ is linear in $a_{\ell j}^t$ and we optimise over a convex feasible set, the supremum of $\mathcal{H}$ is attained at the deterministic allocations $a_{\ell j}^t$. 
Therefore, using the union bound we obtain: 
\begin{align}
    \Pr\Big(\sup_a \mathcal{H} \geq \delta\Big) \leq (VT+1)^L \exp\Big\{ -\frac{\delta^2}{2 LT(\sigma^2 + LT\rho)} \Big\},\nonumber
\end{align}
where $(VT+1)^L$ is the number of possible deterministic allocations.
Finally,
\begin{align}\label{eq:bound}
    \sum_{t=0}^{T-1}\sum_{\ell=1}^L\sum_{j=1}^V a_{\ell j}^t\log p_\theta (x_t^\ell = j | x_{t+1}) \leq &\sum_{t=0}^{T-1}\sum_{\ell=1}^L\sum_{j=1}^V a_{\ell j}^t\log p_\theta(x_T^\ell = j) + \\
    &\bar{\mu} LT + \sqrt{2LT(\sigma^2 + LT\rho)( \log\frac{1}{\delta} + L\log(VT+1))}.\nonumber
\end{align}
with probability at least $1-\delta$. 

From Equations (\ref{eq:dualized}) and (\ref{eq:bound}) we now obtain: 
\begin{align}\label{eq:dualized_simplified}
    \mathbb{E}_{a\sim a^*(\epsilon)} [Reward(a, \epsilon)\, |\, \epsilon\,] \leq & \min_{\lambda \geq 0} \max_{a_{\ell j}^t\in \Delta} \Big[\sum_{t=0}^{T-1} \sum_{\ell=1}^L \sum_{j=1}^V a_{\ell j}^{t} \Big(\log p_\theta(x_T^{\ell}=j) + \lambda(b_{\ell j} - \frac{R}{L})\Big) + H(a_{\ell}^t)\Big] \nonumber\\
    &  + \bar{\mu} LT + \sqrt{2LT(\sigma^2 + LT\rho)( \log\frac{1}{\delta} + L\log(VT+1))}
\end{align}
with probability at least $1-\delta$.
From the first-order conditions we obtain the optimal sampling probabilities:
\begin{align}\label{eq:sampling_probs}
    a_{\ell j}^* = \frac{p_\theta (x_T^\ell=j) e^{  \lambda(b_{\ell j} - R/L)}}{\sum_{i=1}^V p_\theta (x_T^\ell=i) e^{ \lambda(b_{\ell i} - R/L)}}. 
\end{align}
Importantly, as the coefficients $\log p_\theta(x_T^{\ell}=j)$ of the simplified dualised Problem (\ref{eq:dualized_simplified}) do not depend on $t$, the resulting solution in Equation~(\ref{eq:sampling_probs}) does not depend on $t$ as well. 
We now obtain: 
\begin{align}
    \mathbb{E}_{a\sim a^*(\epsilon)} [Reward(a, \epsilon)\, |\, \epsilon\,] \leq \min_{\lambda\geq 0}\; &T \cdot\sum_{\ell=1}^L \log\sum_{i=1}^V p_\theta (x_T^\ell=i) e^{\lambda(b_{\ell i} - R/L)}\nonumber\\
    &  + \bar{\mu} LT + \sqrt{2LT(\sigma^2 + LT\rho)( \log\frac{1}{\delta} + L\log(VT+1))}\nonumber
\end{align}
with probability at least $1-\delta$.\; Q.E.D.
\end{proof}

From Statement \ref{th:reward_OPT} it follows that in domains with the small (co-)variance values $\rho$, $\sigma$ and the small drift values $\bar{\mu}$ (see Section~\ref{app:text-temporal-consistency}) the approximation bounds of primal-dual inference may get significantly tighter. 

We now switch to deriving the lower bound on the objective value reached by Algorithm \ref{alg:pd}.

\begin{statement}\label{th:reward_alg}
    Expected reward of Algorithm \ref{alg:pd} is lower bounded by 
    \begin{align}
        \mathbb{E}_{a(\epsilon)}[Reward(\mathcal{A}, \epsilon) | \epsilon] \geq &\sum_{t=0}^{T-1} \sum_{\ell=1}^L\log \sum_{i=1}^V p_\theta (x_T^\ell=i ) e^{ \lambda_{t+1} (b_{\ell i} - R/L)}
    - \sum_{t=0}^{T-1}\sum_{\ell=1}^L \sum_{j=1}^V a_{\ell j}^t \lambda_{t+1} (b_{\ell j} - \frac{R}{L})\nonumber\\
    &-LT^2\bar{\mu}-LT\sqrt{2\log\frac{LVT}{\delta}T(\sigma^2 + (T-1)\rho)}
    \end{align}
    with probability at least $1-\delta$.
\end{statement}
\begin{proof}
    Consider the expected reward produced by Algorithm \ref{alg:pd} at iteration $t$:
    \begin{align}\label{eq:reward_iter}
        \mathbb{E}_{a(\epsilon)}[\pi_t | \epsilon] = \sum_{\ell=1}^L \sum_{j=1}^V a_{\ell j}^t \log p_\theta(x_t^\ell = j | x_{t+1}) + H(a_\ell^t),
    \end{align}
    where the sampling probability $a_{\ell j}^t$ is defined in Equation~(\ref{eq:sampling}).
    Substituting $a_{\ell j}^t$ in Equation~(\ref{eq:reward_iter}) we obtain: 
    \begin{align}\label{eq:reward_1}
        \mathbb{E}_{a(\epsilon)}[\pi_t | \epsilon] = \sum_{\ell=1}^L\log \sum_{i=1}^V p_\theta (x_t^\ell=i | x_{t+1}) e^{ \lambda_{t+1} (b_{\ell i} - R/L) }- \sum_{\ell=1}^L \sum_{j=1}^V a_{\ell j}^t \lambda_{t+1} (b_{\ell j} - \frac{R}{L}).
    \end{align}
We now rely on Assumption \ref{as:temporal_consistency} to simplify the expression above as follows:  
\begin{align}
    \mathbb{E}_{a(\epsilon)}[\pi_t| \epsilon] = \sum_{\ell=1}^L\log \sum_{i=1}^V p_\theta (x_T^\ell=i ) e^{ \lambda_{t+1} (b_{\ell i} - R/L)} e^{\sum_{\tau=t}^{T-1} \epsilon_{\tau i\ell}}
    - \sum_{\ell=1}^L \sum_{j=1}^V a_{\ell j}^t \lambda_{t+1} (b_{\ell j} - \frac{R}{L}).\nonumber
\end{align}
Observe, that for any real scalar value $X_{i\ell} > 0$ we have
\begin{align}
    \Pr\Big[\sum_{\tau=t}^{T-1} \epsilon_{\tau i\ell} \geq -X_{i\ell} \Big] \geq \Pr\Big[\mathcal{H}_{ti\ell} \geq (T-t)\bar{\mu} - X_{i\ell} \Big] =
    1 - \Pr\Big[\mathcal{H}_{ti\ell} < (T-t)\bar{\mu} - X_{i\ell} \Big].\nonumber
\end{align}
Now, if we let $X_{i\ell} = (T-t)\bar{\mu} + \gamma$, then we can use the concentration bound to obtain:
\begin{align}
    \Pr\Big[\sum_{\tau=t}^{T-1} \epsilon_{\tau i\ell} \geq -X_{i\ell} \Big] \geq 1 - \exp\Big\{-\frac{\gamma^2}{2 (T-t)(\sigma^2 + (T-t-1)\rho)}\Big\}.\nonumber
\end{align}
Consequently, using the union bound across $i$ and $\ell$ we can bound
\begin{align}
    \mathbb{E}_{a(\epsilon)}[\pi_t| \epsilon] \geq  &\sum_{\ell=1}^L\log \sum_{i=1}^V p_\theta (x_T^\ell=i ) e^{ \lambda_{t+1} (b_{\ell i} - R/L)}
    - \sum_{\ell=1}^L \sum_{j=1}^V a_{\ell j}^t \lambda_{t+1} (b_{\ell j} - \frac{R}{L})\nonumber\\
    &-L(T-t)\bar{\mu}-L\sqrt{2\log\frac{LV}{\delta}(T-t)(\sigma^2 + (T-t-1)\rho)}\nonumber
\end{align}
with probability at least $1-\delta$.
Finally, using the union bound across the time steps $t$ we obtain:
\begin{align}
    \mathbb{E}_{a(\epsilon)}\Big[\sum_{t=0}^{T-1} \pi_t\Big| \epsilon \Big] \geq &\sum_{t=0}^{T-1} \sum_{\ell=1}^L\log \sum_{i=1}^V p_\theta (x_T^\ell=i ) e^{ \lambda_{t+1} (b_{\ell i} - R/L)}
    - \sum_{t=0}^{T-1}\sum_{\ell=1}^L \sum_{j=1}^V a_{\ell j}^t \lambda_{t+1} (b_{\ell j} - \frac{R}{L})\nonumber\\
    &-LT^2\bar{\mu}-LT\sqrt{2\log\frac{LVT}{\delta}T(\sigma^2 + (T-1)\rho)}
\end{align}
with probability at least $1-\delta$.
Q.E.D.
\end{proof}

Now, once we derived the upper bound on the expected optimal reward (Statement \ref{th:reward_OPT}) and the lower bound on the expected reward reached by Algorithm \ref{alg:pd} (Statement \ref{th:reward_alg}), we can derive an upper bound on the regret of Algorithm \ref{alg:pd}.
First, we define the expected regret as
\begin{align}
    Regret_\epsilon(\mathcal{A}) = \mathbb{E}_{a\sim a^*(\epsilon)} [Reward(a, \epsilon)\, |\, \epsilon\,] - \mathbb{E}_{a(\epsilon)}[Reward(\mathcal{A}, \epsilon) |\, \epsilon\,]
\end{align}

We start with the following lemma:
\begin{lemma}\label{th:lemma}
    The regret of Algorithm \ref{alg:pd} is at most
    \begin{align}\label{eq:regret_lemma}
        Regret_\epsilon(\mathcal{A}) \leq &\sum_{t=0}^{T-1}\sum_{\ell=1}^L \sum_{j=1}^V a_{\ell j}^t \cdot \lambda_{t+1} (b_{\ell j} - \frac{R}{L}) + \bar{\mu} LT(T+1) + \\
        &\sqrt{4(\sigma^2 + LT\rho) \log\frac{2LVT(VT+1)^L}{\delta} (LT+L^2T^3)}\nonumber
    \end{align}
    with probability at least $(1-\delta)$. 
\end{lemma}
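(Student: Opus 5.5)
The plan is to subtract the lower bound of Statement~\ref{th:reward_alg} from the upper bound of Statement~\ref{th:reward_OPT}, after first running both at confidence level $\delta/2$ and combining them with a union bound, so that the two bounds hold simultaneously with probability at least $1-\delta$.

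The first step handles the optimal reward. The upper bound in Statement~\ref{th:reward_OPT} is a minimum over $\lambda\ge 0$ of
\[
T\sum_{\ell}\log\sum_i p_\theta(x_T^\ell=i)e^{\lambda(b_{\ell i}-R/L)}.
\]
I will not evaluate that minimum. Instead I bound it from above by averaging over the algorithm's own multipliers. For each fixed $t$, the minimum is at most the value at $\lambda=\lambda_{t+1}$. Summing over $t=0,\dots,T-1$ and dividing by $T$ therefore gives
\[
T\min_{\lambda\ge0}\sum_\ell\log\sum_i(\cdot)\;\le\;\sum_{t=0}^{T-1}\sum_\ell\log\sum_i p_\theta(x_T^\ell=i)e^{\lambda_{t+1}(b_{\ell i}-R/L)}.
\]
This sum is exactly the log-partition sum in Statement~\ref{th:reward_alg}, so it cancels in the difference.

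After that cancellation, the only term that depends on the multipliers is
\[
+\sum_{t,\ell,j}a_{\ell j}^t\lambda_{t+1}(b_{\ell j}-R/L),
\]
which comes from the negative term in the algorithm's lower bound. This is precisely the first term of \eqref{eq:regret_lemma}.

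The second step collects the drift terms. They contribute $\bar\mu LT+\bar\mu LT^2=\bar\mu LT(T+1)$, which matches the lemma.

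The third step merges the two square-root terms, and this is the only nontrivial step. At level $\delta/2$ the two terms are
\[
A=\sqrt{2LT(\sigma^2+LT\rho)\log\tfrac{2(VT+1)^L}{\delta}}
\quad\text{and}\quad
B=LT\sqrt{2T(\sigma^2+(T-1)\rho)\log\tfrac{2LVT}{\delta}}.
\]
I plan to do three things:
\begin{enumerate}
\item Bound both logarithms by the common quantity $\log\frac{2LVT(VT+1)^L}{\delta}$, which is allowed because every argument is at least $1$.
\item Bound $\sigma^2+(T-1)\rho$ by $\sigma^2+LT\rho$, which works for $\rho\ge0$.
\item Use $\sqrt{x}+\sqrt{y}\le\sqrt{2(x+y)}$.
\end{enumerate}
With $x=2LT(\cdot)\log$ and $y=2L^2T^3(\cdot)\log$, this gives $\sqrt{4(\sigma^2+LT\rho)\log(\cdot)(LT+L^2T^3)}$, which is the claimed constant.

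The main obstacle is bookkeeping rather than any new idea. Three points need care: that $\rho\ge0$ is used (or that $|\rho|$ is substituted), that the $\delta/2$ split is absorbed into the factor $2$ inside the logarithm, and that the random allocations $a^t_{\ell j}$ appearing in the remaining term are the algorithm's own probabilities from Eq.~(\ref{eq:sampling}). The last point is consistent because the algorithm's lower bound was computed with exactly those allocations.
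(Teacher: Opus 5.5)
Your proposal is correct and follows the same route as the paper. You bound the minimum in Statement~\ref{th:reward_OPT} by its values at the algorithm's multipliers, so the log-partition sums cancel against Statement~\ref{th:reward_alg}, and then collect the drift and concentration terms. You also supply the bookkeeping the paper leaves implicit: the $\delta/2$ union bound, the assumption $\rho\ge 0$, and the merge via $\sqrt{x}+\sqrt{y}\le\sqrt{2(x+y)}$, which reproduces the stated constant exactly, and your consistent use of $\lambda_{t+1}$ is cleaner than the paper's $\lambda_t$ in the summed inequality.
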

\begin{proof}
    Let 
    \begin{align}
        \lambda^* \in\arg\min_\lambda \sum_{\ell=1}^L \log\sum_{i=1}^V p_\theta (x_T^\ell=i) e^{ \lambda(b_{\ell i} - R/L)},\nonumber
    \end{align}
    see Equation~(\ref{th:opt_bound}).
    This implies that 
    \begin{align}
        \sum_{\ell=1}^L \log\sum_{i=1}^V p_\theta (x_T^\ell=i) e^{ \lambda^*(b_{\ell i} - R/L)} \leq \sum_{\ell=1}^L \log\sum_{i=1}^V p_\theta (x_T^\ell=i) e^{ \lambda_t(b_{\ell i} - R/L)},  \;\;\forall t=1,...,T.\nonumber
    \end{align}
    Consequently,
    \begin{align}
        T\sum_{\ell=1}^L \log\sum_{i=1}^V p_\theta (x_T^\ell=i) e^{ \lambda^*(b_{\ell i} - R/L)} \leq \sum_{t=0}^{T-1}\sum_{\ell=1}^L \log\sum_{i=1}^V p_\theta (x_T^\ell=i) e^{ \lambda_t(b_{\ell i} - R/L)}.\nonumber
    \end{align}
    Therefore, from Statements \ref{th:reward_OPT} and \ref{th:reward_alg} it follows:
    \begin{align}
        \mathbb{E}_{a\sim a^*(\epsilon)}&[Reward(a, \epsilon)\, |\, \epsilon\,] \leq \mathbb{E}_{a(\epsilon)}[Reward(\mathcal{A}, \epsilon) |\, \epsilon\,] + \sum_{t=0}^{T-1}\sum_{\ell=1}^L \sum_{j=1}^V a_{\ell j}^t \cdot \lambda_{t+1} (b_{\ell j} - \frac{R}{L}) \nonumber\\
        + &\bar{\mu} LT(T+1) + \sqrt{4(\sigma^2 + LT\rho) \log\frac{2LVT(VT+1)^L}{\delta} (LT+L^2T^3)} \nonumber
    \end{align}    
with probability at least $1-\delta$. Q.E.D.
\end{proof}

We are now ready to prove our final regret bound:
\OnlineAlgRegret*
\begin{proof}
    Let $\delta_t$ be a per-step reward in Equation~(\ref{eq:mirror}), i.e.,
    \begin{align}
        \delta_t = \sum_{\ell=1}^L \sum_{j=1}^V (b_{\ell j} - \frac{R}{L})a_{\ell j}^t,
    \end{align}
    that is defined for any deterministic allocation $a_{\ell j}^t$ at time step $t$ of the reverse diffusion process.
    In this case, we can re-write Equation~(\ref{eq:mirror}) as follows:
    \begin{align}
        \lambda_t = \lambda_{T} \exp\Big\{ -\eta \sum_{t=t}^{T-1}\delta_t \Big\}.
    \end{align}
    Now, we can bound the first sum in Equation~(\ref{eq:regret_lemma}) as follows: 
    \begin{align}
        &\sum_{t=0}^{T-1}\sum_{\ell=1}^L \sum_{j=1}^V a_{\ell j}^t \cdot \lambda_{t+1} (b_{\ell j} - \frac{R}{L}) =\sum_{t=0}^{T-1} \lambda_{t+1} \delta_t \leq \sum_{t=0}^{T-1} \lambda_{t+1} |\delta_t|.
    \end{align}
    Let $B = L \cdot \max_{\ell,j} |b_{\ell j} - \frac{R}{L}|$, so that $|\delta_{t}|\leq B$ for all $t=0,...,T-1$.
    Consequently, we can bound:
    \begin{align}
        \sum_{t=0}^{T-1} \lambda_{t+1} |\delta_t| \leq B \lambda_{T}\sum_{t=0}^{T-1}  \exp\{\eta B (T-t-1)\} \leq B\lambda_T \frac{e^{\eta B T} - 1}{e^{\eta B} - 1} \leq \frac{\lambda_T}{\eta} e^{\eta B T}.
    \end{align}
\end{proof}

\subsection{Bounds on Constraint Violation}\label{app:cv_bounds}

We now illustrate how to bound the expected value of constraint violations reached by Algorithm \ref{alg:pd}.

\OnlineAlgCV*
\begin{proof}
We let $(\Pi_{min}, \Pi_{max})$ be the support of the log probabilities distribution, and let $b_{max}$ be the maximal contribution of the tokens towards the constraint, i.e.,
\begin{align}
    b_{max} = \max_{\ell, j} b_{\ell j}.
\end{align}
We let $r_t$ be the \textit{marginal reward} obtained by Algorithm \ref{alg:pd} at iteration $t$, i.e.,
\begin{align}
    r_t = \sum_{\ell=1}^L \Big(b_{\ell j(t)} - \frac{R}{L}\Big),
\end{align}
where $j(t)$ is the index of the token allocated to position $\ell$ at step $t$.
Remember, that the expected number of masked tokens at time $t+1$ of the reverse diffusion process can be computed as $L (1-\alpha_{t+1})$, and the probability that the token gets unmasked at step $t$ is $\frac{\alpha_t - \alpha_{t+1}}{1-\alpha_{t+1}}$. 
Therefore, the trivial lower bound on the expected marginal reward $r_t$ at time $t$ is
\begin{align}
    \mathbb{E}_{p_{\theta}}[r_t] \geq -\frac{R}{L}\cdot \underbrace{\frac{\alpha_t - \alpha_{t+1}}{1-\alpha_{t+1}} \cdot L (1-\alpha_{t+1})}_{\substack{\text{Expected number of tokens} \\ \text{unmasked at time $t$}}} = -R \cdot (\alpha_t - \alpha_{t+1}).
\end{align}

Now, let us choose some $\beta\in(0,b_{max})$, and assume that for all steps $t\geq \tilde{t}$ we have $\lambda_t \geq (\eta+1)\cdot \frac{\Pi_{max}- \Pi_{min}}{b_{max}-\beta}$. 
Then, for any tokens $i$ and $j$ s.t., $b_{max} = b_{\ell j} > \beta \geq b_{\ell i} $ we have:
\begin{align}
    \underbrace{\big(\log p_\theta(j) + \lambda_t b_{\ell j}\big)}_{s_{\ell j}} - &\underbrace{\big(\log p_\theta(i) + \lambda_t b_{\ell i}\big)}_{s_{\ell i}} =
    \log p_\theta(j) - \log p_\theta(i) + \lambda_t \big(b_{\ell j} - b_{\ell i}\big) \geq\nonumber \\
    &-(\Pi_{max} - \Pi_{min}) + (\eta+1)\cdot\frac{\Pi_{max}- \Pi_{min}}{b_{max}-\beta} \big( b_{max} - b_{\ell i} \big)\geq\nonumber\\
    &(\Pi_{max} - \Pi_{min})\eta\Big[ \frac{b_{max}-\beta}{b_{max}-\beta}\Big] \geq \eta (\Pi_{max} - \Pi_{min}).
\end{align}

This means that for any time step $t=1,...,\tilde{t}$ the score $s_{\ell j}$ of the token with the largest attribute $b_{max}$ is at least $\eta(\Pi_{max}-\Pi_{min})$ larger that the score of any token with attribute $b_{\ell i} \leq \beta$, i.e., $s_{\ell i} \leq s_{\ell j} - \eta(\Pi_{max} - \Pi_{min})$.
Therefore, the probability of sampling tokens with attributes at least $\beta$ can be lower bounded as follows: 
\begin{align}
    \frac{\exp\{s_{\ell j}\}}{ \sum_{i=1}^V \exp\{s_{\ell i}\}} \geq \frac{e^{s_{\ell j}}}{ \exp\{s_{\ell j}\} + (V-1) e^{s_{\ell j} - \eta(\Pi_{max} - \Pi_{min})}} = 
    \frac{1}{1+(V-1)e^{-\eta(\Pi_{max} - \Pi_{min})}}.
\end{align}

Consequently, we can lower bound the expected marginal reward for any time steps 1,...,$\tilde{t}$: 
\begin{align}
    \mathbb{E}_{p_\theta}[r_t] \geq  \frac{(L \beta - R)(\alpha_t - \alpha_{t+1}) }{1+(V-1)\exp\{-\eta(\Pi_{max} - \Pi_{min})\}}\;\;\;\;\;\forall t=1,...,\tilde{t}.
\end{align}
Observe, that by choosing sufficiently large value of $\eta$, we can make this marginal reward being very close to its maximal possible value $b_{max}-\frac{R}{L}$ for each unmasked token. 

Now, the total constraint violation:
\begin{align}
    -\sum_{t=1}^T \mathbb{E}_{p_\theta}[r_t] =& -\sum_{t=\tilde{t}+1}^T \mathbb{E}_{p_\theta}[r_t] - \sum_{t=1}^{\tilde{t}} \mathbb{E}_{p_\theta}[r_t] \leq\nonumber\\
    & R(\alpha_{\tilde{t}+1} - \alpha_{T}) - \frac{(L \beta - R)(\alpha_1 - \alpha_{\tilde{t}+1}) }{1+(V-1)\exp\{-\eta(\Pi_{max} - \Pi_{min})\}} \leq\nonumber \\
    &R(\alpha_{\tilde{t}+1} - \alpha_{T}) - (L \beta - R)(\alpha_1 - \alpha_{\tilde{t}+1}) \leq\nonumber \\
    & R(\alpha_1 -\alpha_T)  - L\beta (\alpha_1 - \alpha_{\tilde{t}+1}).\nonumber
\end{align} 

Notice, that for Algorithm \ref{alg:pd} to reach $\tilde{\lambda} = (\eta+1)\cdot \frac{\Pi_{max}-\Pi_{min}}{b_{max}-\beta}$, it must accumulate the total expected marginal reward of at least $-R(\alpha_{\tilde{t}} - \alpha_T)$ at which point 
\begin{align}
\alpha_{\tilde{t}} = \frac{\log \tilde{\lambda}}{\eta R}.
\end{align}
Consequently, 
\begin{align}
    \alpha_{\tilde{t}+1} \leq \alpha_{\tilde{t}} = \frac{1}{\eta R} \log \Big\{(\eta+1)\cdot \frac{\Pi_{max}-\Pi_{min}}{b_{max}-\beta}\Big\}.
\end{align}
Thus, the total constraint violation must be bounded by  
\begin{align}
    -\sum_{t=1}^T \mathbb{E}_{p_\theta}[r_t]\leq R - L\beta (1 - \frac{1}{\eta R} \log \Big\{(\eta+1)\cdot \frac{\Pi_{max}-\Pi_{min}}{b_{max}-\beta}\Big\}). 
\end{align}
We set $\beta = R/L$ to obtain:
\begin{align}
    -\sum_{t=1}^T \mathbb{E}_{p_\theta}[r_t]\leq \frac{1}{\eta} \log\Big\{ (\eta+1)\cdot \frac{\Pi_{max}-\Pi_{min}}{b_{max}-\frac{R}{L}} \Big\}.
\end{align}
Q.E.D.
\end{proof}

\section{Extended Results}\label{sec:extended-results}

\subsection{Text Generation Details}\label{app:text-details}

\begin{table}[htbp]
\centering
\caption{TinyStories ocean steering, $N{=}1{,}000$. SPDD: accumulated slack, $\eta{=}2.0$. LLaDA-8B transfer in Appendix~\ref{app:llada}.}
\label{tab:ocean}
\begin{tabular}{lccccc}
\toprule
Method & Target (avg)  & Pass\% & PPL$\downarrow$ & Dist-2$\uparrow$ & KL$\downarrow$ \\
\midrule
Unconstrained & 0.3 & \phantom{0}0.5 $\pm$ 0.2\% & 24.7 & 0.187 & 0.000 \\
Static bias ($\alpha\!=\!2.0$) & 19.2 & 81.9 $\pm$ 1.2\% & 24.8 & 0.169 & 0.706 \\
CDD ($\tau{=}0.5$) & \phantom{0}8.1 & 38.0 $\pm$ 1.5\% & 24.6 & 0.318 & 0.221 \\
D-CBG ($\gamma{=}5$) & 0.1 & \phantom{0}0.0 $\pm$ 0.0\% & 25.2 & 0.336 & 0.127 \\
GPT-4.1-mini & 31.5 & 100\phantom{$\pm$00.0}\% & 27.1 & 0.082 & 2.852 \\
\midrule
SPDD (Ours) & 11.0 & 92.1 $\pm$ 0.9\% & 25.3 & 0.175 & 0.420 \\
\bottomrule
\end{tabular}
\end{table}

We organise this appendix into three groups: (i) hyperparameter sweeps that establish the operating-point robustness claimed in Section~\ref{sec:experiments}, (ii) qualitative illustrations of how SPDD shapes generation step-by-step, and (iii) empirical verification of the temporal-consistency assumption that underpins Statement~\ref{th:regret}.

\paragraph{Slack Mode Ablation.}\label{app:text-slack}

\begin{table}[htbp]
\centering
\caption{Slack mode ablation, per-position $\lambda$, $R{=}10$, $N{=}1{,}000$ (same setup as Table~\ref{tab:ocean}). Ocean reported as mean $\pm$ per-sample SE; Ocean$/R$ is the overshoot ratio.}
\label{tab:ocean-slack}
\begin{tabular}{llrrrrrr}
\toprule
Slack mode & $\eta$ & Ocean & Ocean$/R$ & Pass\% & PPL$\downarrow$ & Dist-2$\uparrow$ & KL$\downarrow$ \\
\midrule
Accumulated & 0.1 & \phantom{0}6.5 $\pm$ 4.0 & 0.65 & 20.5 $\pm$ 1.3\% & 25.5 & 0.185 & \textbf{0.281} \\
Accumulated & 0.5 & \phantom{0}9.5 $\pm$ 2.6 & 0.95 & 46.9 $\pm$ 1.6\% & 25.6 & 0.181 & \textbf{0.365} \\
Accumulated & 1.0 & 10.5 $\pm$ 2.0 & 1.05 & 67.6 $\pm$ 1.5\% & 25.7 & 0.180 & \textbf{0.402} \\
Accumulated & 2.0 & 11.0 $\pm$ 1.7 & 1.10 & 92.1 $\pm$ 0.9\% & 25.3 & 0.175 & \textbf{0.420} \\
\midrule
Instantaneous & 0.1 & 10.9 $\pm$ 5.5 & 1.09 & 55.2 $\pm$ 1.6\% & 24.8 & 0.178 & 0.412 \\
Instantaneous & 0.5 & 21.4 $\pm$ 4.4 & 2.14 & 100\phantom{$\pm$00.0}\% & \textbf{23.0} & 0.159 & 0.890 \\
Instantaneous & 1.0 & 21.5 $\pm$ 4.3 & 2.15 & 100\phantom{$\pm$00.0}\% & 24.5 & 0.161 & 0.916 \\
Instantaneous & 2.0 & 21.8 $\pm$ 4.8 & 2.18 & 100\phantom{$\pm$00.0}\% & 24.8 & 0.164 & 0.924 \\
\midrule
Early & 0.1 & \phantom{0}8.5 $\pm$ 5.5 & 0.85 & 37.0 $\pm$ 1.5\% & 25.2 & 0.182 & 0.342 \\
Early & 0.5 & 19.6 $\pm$ 4.4 & 1.96 & 99.6 $\pm$ 0.2\% & 23.7 & 0.160 & 0.828 \\
Early & 1.0 & 20.5 $\pm$ 4.5 & 2.05 & 100\phantom{$\pm$00.0}\% & 24.0 & 0.161 & 0.879 \\
Early & 2.0 & 21.0 $\pm$ 4.6 & 2.10 & 100\phantom{$\pm$00.0}\% & 24.6 & 0.164 & 0.892 \\
\midrule
Optimistic (OMD) & 0.1 & 10.6 $\pm$ 5.4 & 1.06 & 53.1 $\pm$ 1.6\% & 24.8 & 0.176 & 0.399 \\
Optimistic (OMD) & 0.5 & 21.3 $\pm$ 4.3 & 2.13 & 100\phantom{$\pm$00.0}\% & 23.2 & 0.158 & 0.898 \\
Optimistic (OMD) & 1.0 & 21.4 $\pm$ 4.6 & 2.14 & 100\phantom{$\pm$00.0}\% & 23.7 & 0.161 & 0.930 \\
Optimistic (OMD) & 2.0 & 21.5 $\pm$ 4.6 & 2.15 & 100\phantom{$\pm$00.0}\% & 24.7 & 0.166 & 0.919 \\
\bottomrule
\end{tabular}
\end{table}

Slack computation strongly affects both satisfaction and how far the constrained distribution drifts from the unconstrained model. \emph{Accumulated} stays closest to the target ($\bar{c}\!\approx\!R$) and is uniformly the lowest-KL mode at every $\eta$ (KL\,$\leq$\,0.42), at the cost of slower satisfaction ramp (21\%\,$\to$\,92\% across $\eta\!\in\![0.1,2]$) and elevated PPL. \emph{Instantaneous}, \emph{early}, and \emph{optimistic} all saturate pass rate at $\eta\!\geq\!0.5$ but at $2{-}3\times$ higher KL ($0.83{-}0.93$): $\lambda$ saturates at the clamp ceiling from step~0 (bang-bang control~\citep{bryson1975applied}), irreversibly committing early tokens and pulling the output distribution toward a thin slice of ocean-heavy text. Within those three saturating modes, instantaneous achieves the lowest PPL at $\eta\!=\!0.5$ (23.0), with optimistic and early within $\pm0.5$ PPL. Optimistic's distinguishing benefit is at the moderate $\eta\!=\!0.1$ regime, where its prediction correction lifts pass rate from 49\% (instantaneous) and 37\% (early) to 53\% at slightly lower KL (0.40 vs.\ 0.41 / 0.34), the regime where the constraint is binding but the model's own forecasts are still informative.

\paragraph{Optimistic Mirror Descent Slack Mode.}

Standard mirror descent updates $\lambda$ using only the current constraint violation $h_k = R - c_k$, where $c_k$ is the current count and $R$ is the target. Optimistic Mirror Descent (OMD)~\citep{rakhlin2013online} improves convergence by incorporating a prediction $M_k$ of the \emph{next} violation, yielding the combined update:
\begin{equation}
    h^\text{opt}_k = h_k + \tilde{M}_k - \tilde{M}_{k-1}, \quad
    \lambda_{k+1} = \lambda_0 \cdot \exp\!\big(\eta \cdot h^\text{opt}_k\big)
\end{equation}
where $\tilde{M}_k = (1 - \bar{m}_k) \cdot M_k$ anneals the prediction by the unmasked fraction $\bar{m}_k$, and $M_k = R - (c_k + \hat{E}_k)$ with $\hat{E}_k = \sum_{\ell \in \text{masked}} \sum_j p_\theta(j \mid x_k) \cdot s_j$ being the expected future contribution from masked positions.

This is a natural fit for discrete diffusion: early in denoising, most tokens are masked and predictions are unreliable (annealed to zero, reducing to standard mirror descent); late in denoising, few tokens remain masked and predictions are highly accurate, giving near-optimal lambda adjustment. The OMD regret bound replaces $\sum_k \|g_k\|^2$ with $\sum_k \|g_k - M_k\|^2$, which is small precisely when it matters most (late steps).

Table~\ref{tab:ocean-slack} reports the OMD row alongside accumulated/instantaneous/early slack at matched $\eta$. At $\eta{=}0.5$, OMD achieves 98\% pass rate with PPL 23.0, the best fluency across all slack modes at high constraint satisfaction. The prediction correction smooths the $\lambda$ trajectory, reducing over-guidance that degrades fluency. At $\eta{=}1.0$, OMD matches instantaneous (100\% pass) with comparable metrics. The computational overhead is negligible (${\sim}1$\% additional time for the softmax prediction at each step). OMD is most beneficial at moderate $\eta$ where the standard modes must choose between satisfaction and fluency; at aggressive $\eta$, all modes saturate the constraint.

\paragraph{Constraint-Difficulty ($R$) Sweep.}

\begin{table}[htbp]
\centering
\caption{Constraint-difficulty sweep, per-position $\lambda$, $\eta\!=\!0.5$, $N\!=\!1{,}000$, $R\!\in\!\{5,10,15,20\}$. Ocean is mean $\pm$ per-sample SE. Complements Figure~\ref{fig:rsweep}.}
\label{tab:ocean-rsweep}
\begin{tabular}{llrrrrr}
\toprule
$R$ & Slack mode & Ocean & Pass\% & PPL$\downarrow$ & Dist-2$\uparrow$ & KL$\downarrow$ \\
\midrule
\phantom{0}5 & Optimistic    & 13.7 $\pm$ 4.3 & 83.1 $\pm$ 1.2\% & 23.9 & 0.166 & 0.558 \\
\phantom{0}5 & Instantaneous & 13.9 $\pm$ 4.4 & 83.7 $\pm$ 1.2\% & 23.9 & 0.165 & 0.583 \\
\phantom{0}5 & Accumulated   & \phantom{0}6.3 $\pm$ 2.8 & 11.5 $\pm$ 1.0\% & 25.8 & 0.187 & \textbf{0.260} \\
\midrule
10 & Optimistic    & 21.3 $\pm$ 4.4 & 99.9 $\pm$ 0.1\% & 23.8 & 0.163 & 0.906 \\
10 & Instantaneous & 21.2 $\pm$ 4.4 & 100\phantom{$\pm$00.0}\% & 23.4 & 0.160 & 0.903 \\
10 & Accumulated   & \phantom{0}9.6 $\pm$ 2.6 & 45.1 $\pm$ 1.6\% & 25.6 & 0.179 & \textbf{0.372} \\
\midrule
15 & Optimistic    & 26.9 $\pm$ 4.7 & 100\phantom{$\pm$00.0}\% & 23.4 & 0.155 & 1.111 \\
15 & Instantaneous & 27.5 $\pm$ 4.8 & 100\phantom{$\pm$00.0}\% & \textbf{23.2} & 0.156 & 1.135 \\
15 & Accumulated   & 13.1 $\pm$ 2.5 & 93.1 $\pm$ 0.8\% & 25.6 & 0.176 & \textbf{0.475} \\
\midrule
20 & Optimistic    & 32.7 $\pm$ 4.7 & 100\phantom{$\pm$00.0}\% & 23.9 & 0.155 & 1.354 \\
20 & Instantaneous & 32.9 $\pm$ 4.8 & 100\phantom{$\pm$00.0}\% & 23.4 & 0.152 & 1.340 \\
20 & Accumulated   & 16.9 $\pm$ 2.4 & 99.5 $\pm$ 0.2\% & 25.8 & 0.172 & \textbf{0.625} \\
\bottomrule
\end{tabular}
\end{table}

\begin{figure}[htbp]
\centering
\includegraphics[width=\linewidth]{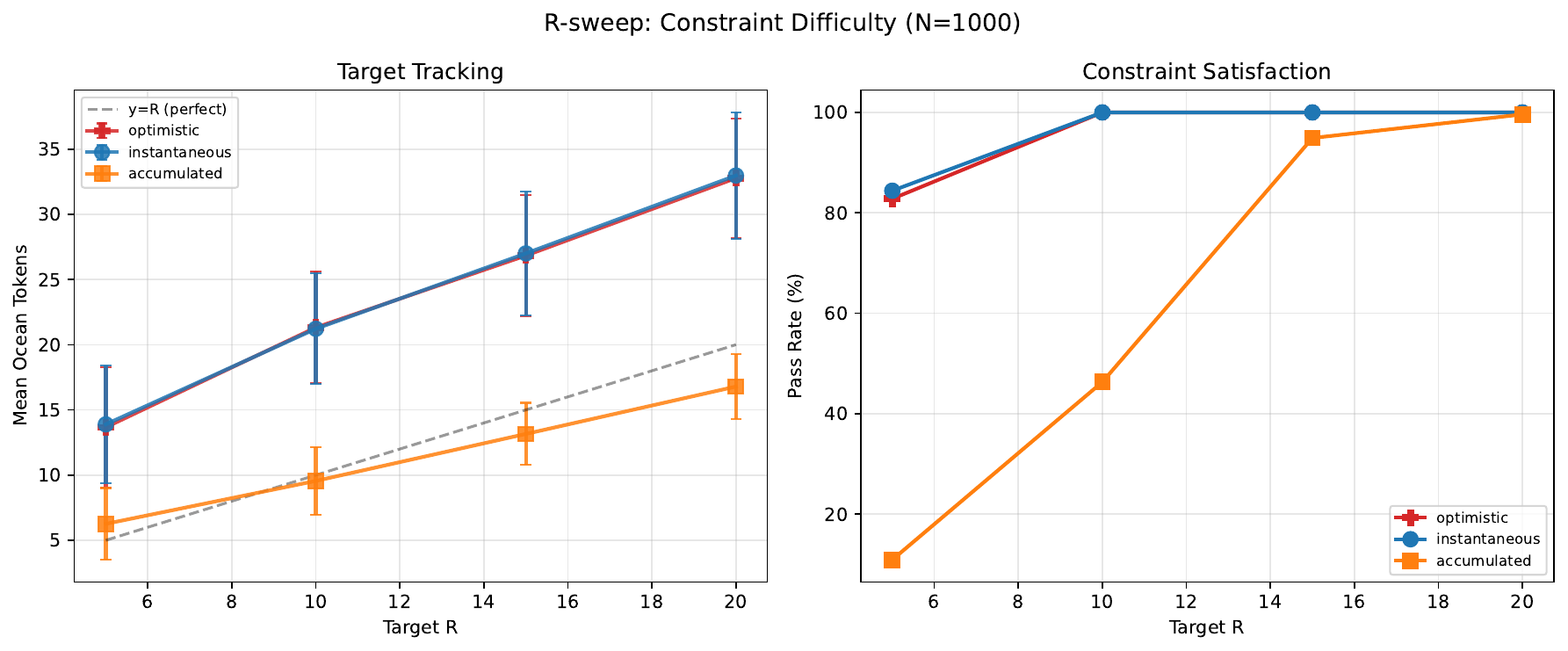}
\caption{Constraint difficulty sweep ($R \in \{5, 10, 15, 20\}$, $\eta\!=\!0.5$, $N\!=\!1{,}000$). Accumulated slack stays closest to the target and lowest-KL; optimistic and instantaneous saturate pass rate from $R\!=\!10$ at higher KL.}
\label{fig:rsweep}
\end{figure}

The sweep exposes three effects the main-text figure alone cannot show. First, \emph{accumulated lags asymmetrically with $R$}: at $R\!=\!5$ the prorated per-step target is too small to drive $\lambda$ off its floor and the method barely fires (11.5\% pass), whereas at $R\!=\!20$ the target is large enough that even the conservative ramp satisfies the constraint (99.5\% pass, $\bar{c}\!=\!16.9$). Instantaneous and optimistic show no such regime dependence. Second, \emph{accumulated dominates on KL at every $R$} (0.26--0.63 vs.\ 0.56--1.35 for instantaneous/optimistic), making it the right mode when distributional fidelity matters more than peak satisfaction. Third, \emph{KL grows roughly linearly with $R$} for the saturating modes (instantaneous: $0.58 \to 1.34$ across $R\!\in\![5,20]$), while accumulated KL grows much more slowly ($0.26 \to 0.63$): the constraint deficit drives both pass rate and KL, and accumulated's progress-prorated target absorbs the deficit instead of the model's distribution.

\paragraph{KL--pass rate Pareto.}

Figure~\ref{fig:pareto} visualises the cross-method trade-off on text from Table~\ref{tab:unified}: KL divergence against the unconstrained distribution on the $x$-axis, Pass\% on the $y$-axis. SPDD attains the highest Pass\% (92.1\%) at substantially lower KL than the next-best high-Pass baseline (Static at 81.9\% Pass / KL $0.71$); CDD trades much lower Pass for moderately lower KL; D-CBG cannot satisfy the sparse 18-token constraint at any tested $\gamma$. The dashed line shows the empirical Pareto frontier across these methods.

\begin{figure}[htbp]
\centering
\includegraphics[width=0.7\linewidth]{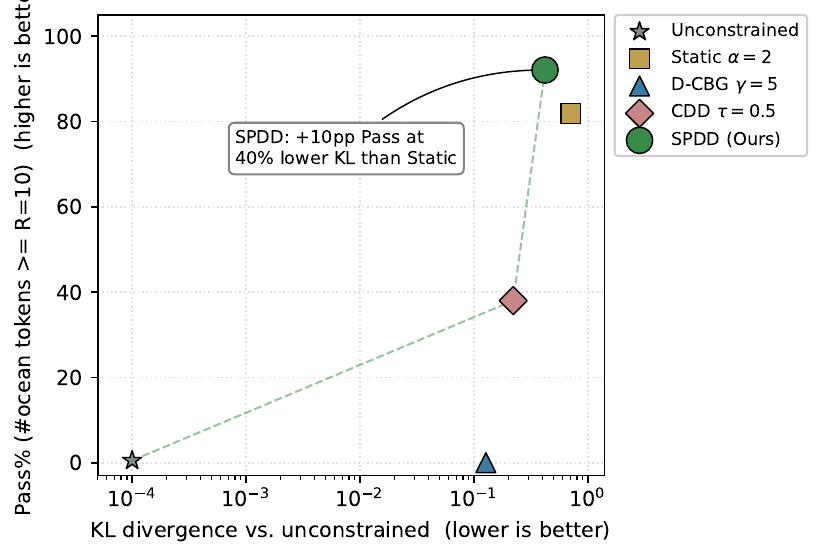}
\caption{Text Pareto front: unigram KL vs.\ unconstrained against Pass\% (\#ocean tokens~$\geq R{=}10$), at the operating points reported in Table~\ref{tab:unified} ($N{=}1{,}000$, 142M-MDLM TinyStories backbone). SPDD reaches the upper-right corner of the achievable region: highest Pass\%, with strictly lower KL than Static bias. The dashed line traces the empirical Pareto frontier (Unconstrained $\to$ CDD $\to$ SPDD).}
\label{fig:pareto}
\end{figure}

Beyond aggregate metrics, the next two paragraphs illustrate qualitatively how SPDD shapes generation, both as a step-by-step denoising trajectory and as completed sample stories.

\paragraph{Denoising Progression by Slack Mode.}

Table~\ref{tab:denoising-progression} shows the denoising progression for the TinyStories ocean-guided generation experiment in the main text. For each of three slack modes (accumulated, instantaneous, optimistic), we display the partial token sequence at four stages of denoising (25\%, 50\%, 75\%, 100\% of $K\!=\!256$ steps), using per-position $\lambda$ with $\eta\!=\!0.5$ and target $R\!=\!10$. Underscores denote masked (uncommitted) positions; ocean words are in \textbf{bold}.

The timing of ocean token placement differs substantially across slack modes. With instantaneous slack, 11 of 21 final ocean tokens are already committed at 25\% of denoising: $\lambda$ saturates at the clamp ceiling from step~0, forcing every early token to be ocean-themed. With optimistic slack, ocean tokens appear more gradually (7 at 25\%, 10 at 50\%), because the prediction correction $\hat{E}_k$ tempers the guidance as context builds. Accumulated slack is the slowest, with only 4 ocean tokens at 25\%, reflecting its conservative target prorating.

\begin{table}[htbp]
\centering
\small
\caption{Denoising progression for a single sample ($\eta\!=\!0.5$, $R\!=\!10$). First ${\sim}40$ tokens shown at each stage. Underscores are masked positions; ocean words in \textbf{bold}.}
\label{tab:denoising-progression}
\begin{tabular}{llcp{10cm}}
\toprule
Mode & Stage & Ocean & Token sequence (first ${\sim}40$ tokens) \\
\midrule
\multirow{4}{*}{\rotatebox{90}{Accumulated}}
& 25\% & 4 & \_ \_ \_ \_ \_ \_ the \textbf{beach} near \_ \_ \_ \_ \_ a big \_ with \_ \_ \_ \_ \_ splash \_ the \_ . `` \_ ' \_ \_ \_ \textbf{wave} \_ the \_ \_ \_ \_ \_ \_ \\
& 50\% & 9 & \_ and \_ \_ \_ \_ the \textbf{beach} near the \textbf{ocean} \_ They \_ a big \_ with some seagulls \_ They liked \_ \_ the waves splash \_ the waves . `` Let ' s \_ \_ \textbf{wave} to the \textbf{shore} \\
& 75\% & 12 & \_ and Mia \_ \_ \_ the \textbf{beach} near the \textbf{ocean} . They had a big truck with some seagulls . They liked to watch the waves splash \_ the waves . `` Let ' s \_ \_ \textbf{wave} to the \textbf{shore} \\
& 100\% & 15 & Tom and Mia were playing on the \textbf{beach} near the \textbf{ocean} . They had a big truck with some seagulls . They liked to watch the waves splash and the waves . `` Let ' s send a \textbf{wave} to the \textbf{shore} \\
\midrule
\multirow{4}{*}{\rotatebox{90}{Instantaneous}}
& 25\% & 11 & One \_ \_ \_ and \_ \_ to \_ \_ with \_ mom \_ dad \_ They \_ \_ boat \_ \_ \textbf{dolphin} jumped out \_ \_ \textbf{dolphin} was \_ \_ of \_ \_ \_ \_ \_ \_ \textbf{dolphin} \_ closer \\
& 50\% & 15 & One \_ , Ben and \_ went to \_ \_ with \_ mom \_ dad \_ They saw a big boat \_ \_ \textbf{dolphin} jumped out \_ \_ \textbf{dolphin} was \_ \_ of \_ \_ \_ . Ben \_ \_ \textbf{dolphin} \_ closer \\
& 75\% & 19 & One \_ , Ben and Lily went to the \textbf{beach} with their mom \_ dad \_ They saw a big boat and a \textbf{dolphin} jumped out . \_ \textbf{dolphin} was a \_ of a \_ tail . Ben \_ the \textbf{dolphin} \_ closer \\
& 100\% & 21 & One day , Ben and Lily went to the \textbf{beach} with their mom and dad . They saw a big boat and a \textbf{dolphin} jumped out . The \textbf{dolphin} was a \textbf{fish} of a shiny tail . Ben and the \textbf{dolphin} came closer \\
\midrule
\multirow{4}{*}{\rotatebox{90}{Optimistic}}
& 25\% & 7 & \_ \_ and \_ \_ to go \_ \_ \textbf{beach} \_ \_ \_ \_ \_ slow \_ \_ \_ loved \_ play \_ \_ \_ \_ that \_ \_ \_ \_ in \_ \_ \_ \_ \_ down \_ the \_ \_ \_ \\
& 50\% & 10 & \_ \_ and Mia \_ to go \_ \_ \textbf{beach} with their \_ \_ \_ slow turtle \_ \_ loved to play \_ \_ \_ \_ that they kicked \_ \_ in \_ \_ \_ \_ \_ down \_ the \textbf{shore} \\
& 75\% & 15 & Sam and Mia liked to go \_ the \textbf{beach} with their mom \_ \_ slow turtle . \_ loved to play with the \_ \textbf{whale} that they kicked and \_ in \_ \_ \_ \_ came down \_ the \textbf{shore} \\
& 100\% & 18 & Sam and Mia liked to go to the \textbf{beach} with their mom and the slow turtle . They loved to play with the big \textbf{whale} that they kicked and splashed in . But the \textbf{whale} came down from the \textbf{shore} \\
\bottomrule
\end{tabular}
\end{table}

\paragraph{Qualitative Examples.}

\begin{table}[htbp]
\centering
\small
\caption{Generated story excerpts (truncated to ${\sim}80$ words). Ocean words in \textbf{bold}.}
\label{tab:examples}
\begin{tabular}{p{2.2cm}p{10.3cm}}
\toprule
Method & Story excerpt \\
\midrule
Unconstrained \newline (0 ocean) & Lily and Ben were playing in the backyard. They had a big bowl of water. They wanted to make yummy cakes and cookies. ``Can you mix too, Lily?'' Ben asked his sister. ``Yes, yes, yes!'' Lily said. She helped him mix some flowers and mud shapes. She made a tasty cake also with two sprinkles\ldots \\
\midrule
Optimistic \newline $\eta\!=\!0.5$ \newline (10 ocean) & Tim and Lily are friends who like to play at the \textbf{beach}. They have a ball and a net. They like to kick the ball and chase each other. One day, they see a big dog at the \textbf{beach}. It looks mighty. Tim and Lily think it is friendly to play. They run to the dog and try to grab the ball. The dog sees Tim and Lily are afraid and \textbf{wave} their ball\ldots \\
\midrule
Instantaneous \newline $\eta\!=\!1.0$ \newline (10 ocean) & Lily and Ben liked to play at the \textbf{beach}. They liked to kick the ball and dig holes and look for shells. One day, they saw a \textbf{dolphin} in the water. ``Look, look!'' Lily said, watching the \textbf{dolphin}. ``It is so pretty and shiny.'' ``Go away, Lily!'' Ben said. ``No, \textbf{shark}s are mean and rude''\ldots \\
\bottomrule
\end{tabular}
\end{table}

Statement~\ref{th:regret} rests on Assumption~\ref{as:temporal_consistency}: that logit increments factor as a slow drift plus near-zero-covariance Gaussian residuals. The remaining paragraphs verify this empirically on text and molecular MDLMs, and then check whether the downstream bound holds in practice.

\paragraph{Empirical Verification of Weak Temporal Consistency.}\label{app:text-temporal-consistency}

Assumption~\ref{as:temporal_consistency} requires: (i) the logit increments $\varepsilon_{tj\ell}$ are Gaussian with bounded variance $\sigma^2$, and (ii) the cross-position covariance $\mathrm{Cov}(\varepsilon_{tj\ell}, \varepsilon_{tjk})$ is at most a small constant $\rho$. We test these empirically on the 142M MDLM trained on TinyStories ($N{=}10$ samples, $L{=}256$, $T{=}128$ denoising steps). Increments are computed only at positions that remain masked at both consecutive steps (once unmasked, logits are frozen and $\varepsilon$ trivially vanishes). After decomposing $\varepsilon_{tj\ell} = \mu_{tj\ell} + \eta_{tj\ell}$ via the per-($t,j,\ell$) sample mean $\hat\mu_{tj\ell}$, we inspect the residual $\eta$.

Figure~\ref{fig:temporal-consistency} (top) shows the raw $\varepsilon$ distribution and Q--Q plot; Figure~\ref{fig:temporal-consistency} (bottom) shows the centred residual $\eta$. The centred distribution is visibly closer to Gaussian in the bulk, confirming that a large fraction of the deviation from Gaussianity is absorbed by the drift term $\mu_{tj\ell}$. A heavy left tail remains at large unmasking events; this is consistent with the assumption holding in expectation but admitting occasional outliers.

\begin{figure}[h]
\centering
\includegraphics[width=0.95\linewidth]{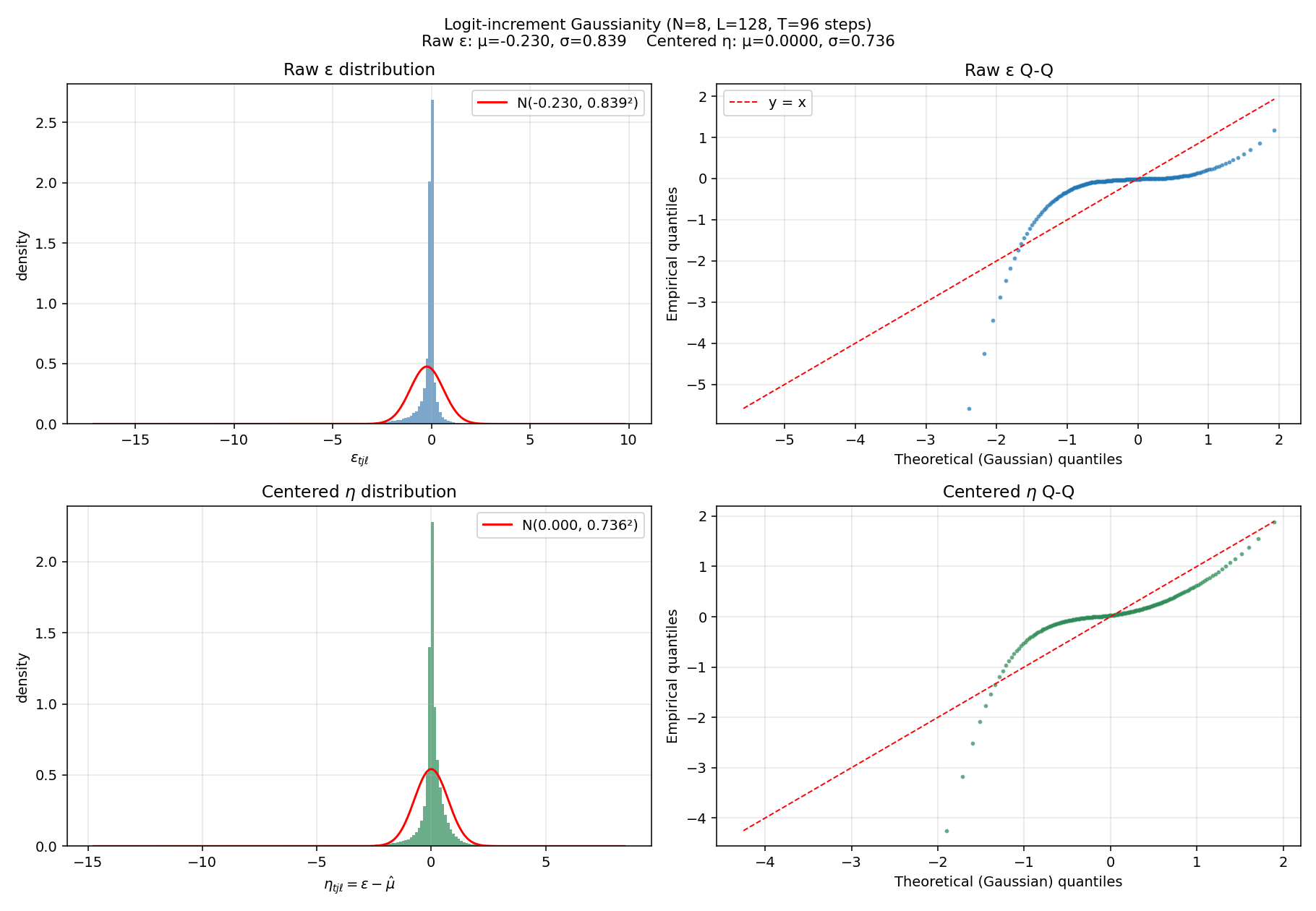}
\caption{Logit-increment distribution on TinyStories MDLM. Top row: raw $\varepsilon_{tj\ell}$. Bottom row: residual $\eta_{tj\ell} = \varepsilon - \hat\mu_{tj\ell}$ after subtracting the per-($t,j,\ell$) sample mean. Centring concentrates the bulk near a Gaussian; the remaining heavy left tail corresponds to unmasking events. Overall: the factorisation required by the theory holds directionally; cross-position covariance is empirically near zero ($\bar\rho \approx 10^{-5}$), and the central mass of $\eta$ is near-Gaussian with time-varying (but bounded) $\sigma_t$.}
\label{fig:temporal-consistency}
\end{figure}

Across the 128 denoising steps, $\sigma_t$ fluctuates between $\sim$0.2 and $\sim$1.6 with mean 0.71 (not constant, but bounded). The cross-position covariance is the strongest empirical result: for every token tested, the empirical $\bar\rho$ of the residual $\eta$ is on the order of $10^{-5}$, i.e., essentially zero relative to its diagonal variance. This is the component of the assumption that drives the per-position factorisation in Section~\ref{sec:method}, and it holds very cleanly on our model.

\paragraph{Logit trajectories.} To visualise how individual logits evolve under the reverse process, we plot $\log p_\theta(x_t^\ell = j \mid x_{t+1})$ directly from the backbone (before the SUBS parameterisation rewrites unmasked positions to identity, which would otherwise produce spurious step-changes at unmasking events). Figure~\ref{fig:logit-trajectories} shows the top-3 tokens per position across all $L=32$ positions of a single sample, colour-coded by position. The trajectories are continuous and smoothly varying: most tokens stabilise into distinct log-probability bands by step ${\sim}20$--$40$ and then drift slowly for the remainder of the denoising process. This is precisely the picture Assumption~\ref{as:temporal_consistency} encodes: a slow per-($t,j,\ell$) drift $\mu_{tj\ell}$ plus small noise $\eta_{tj\ell}$.

\begin{figure}[h]
\centering
\includegraphics[width=0.95\linewidth]{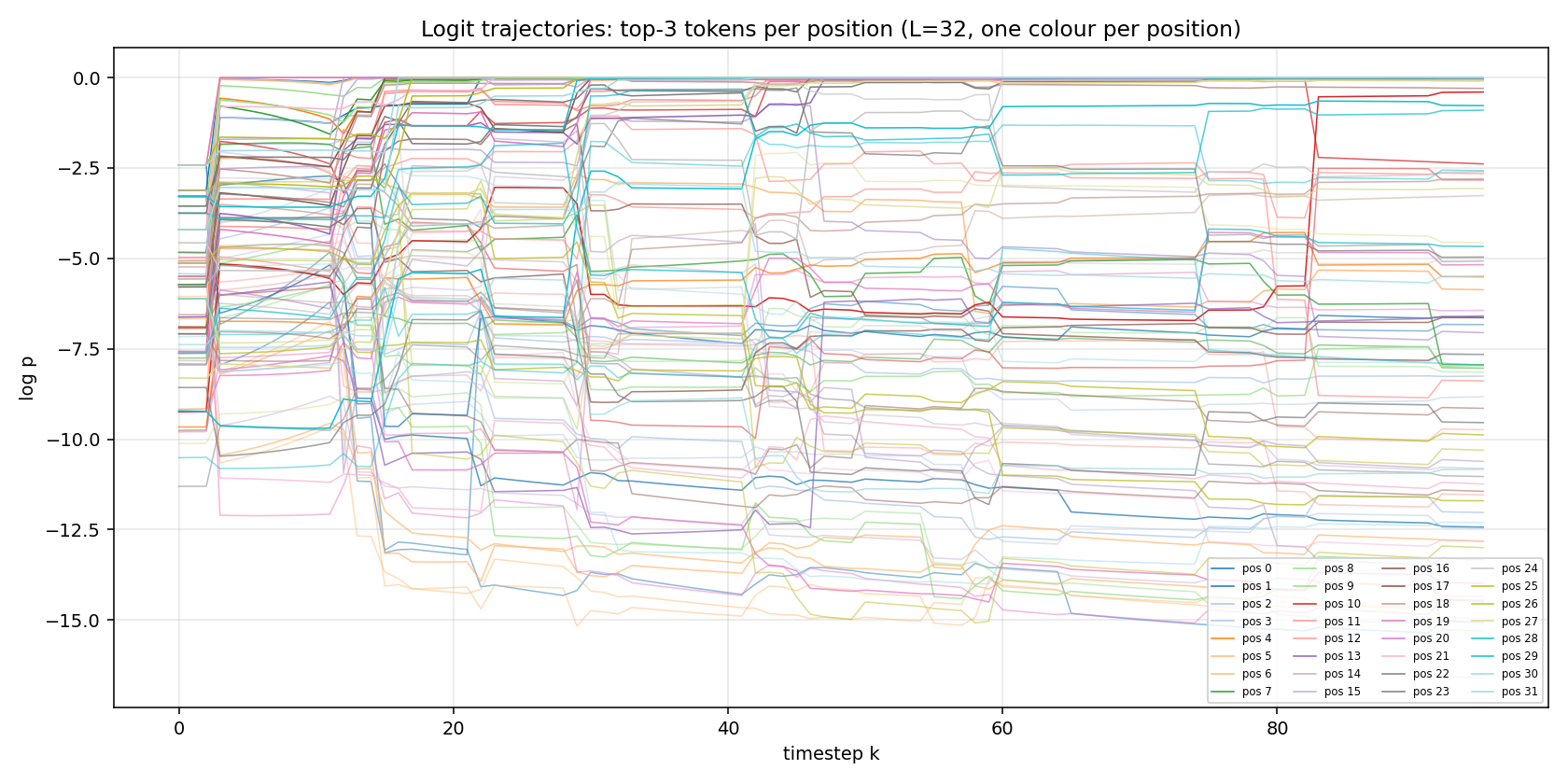}
\caption{Raw backbone log-probabilities $\log p_\theta(x_t^\ell = j \mid x_{t+1})$ for the top-3 tokens at each of $L=32$ positions, across $T=96$ denoising steps (one colour per position; transparency encodes rank within the position). Logits evolve smoothly, with no sharp transitions; we plot pre-parameterisation values, bypassing the SUBS rewrite that forces unmasked positions to identity. Different positions cluster at different log-probability bands but do not co-move strongly, consistent with the low-cross-position-covariance finding.}
\label{fig:logit-trajectories}
\end{figure}

Figure~\ref{fig:logit-heatmap} complements this with a vocabulary-wide view: for eight representative positions, we show a heatmap of $\log p_\theta(x_t^\ell = j \mid x_{t+1})$ with the vocabulary sorted by final-step probability. The top tokens (top rows) brighten smoothly as $t$ decreases; lower-rank tokens fade continuously. No sharp bands appear: the model's beliefs concentrate gradually rather than via discrete commitments at specific steps.

\begin{figure}[h]
\centering
\includegraphics[width=0.95\linewidth]{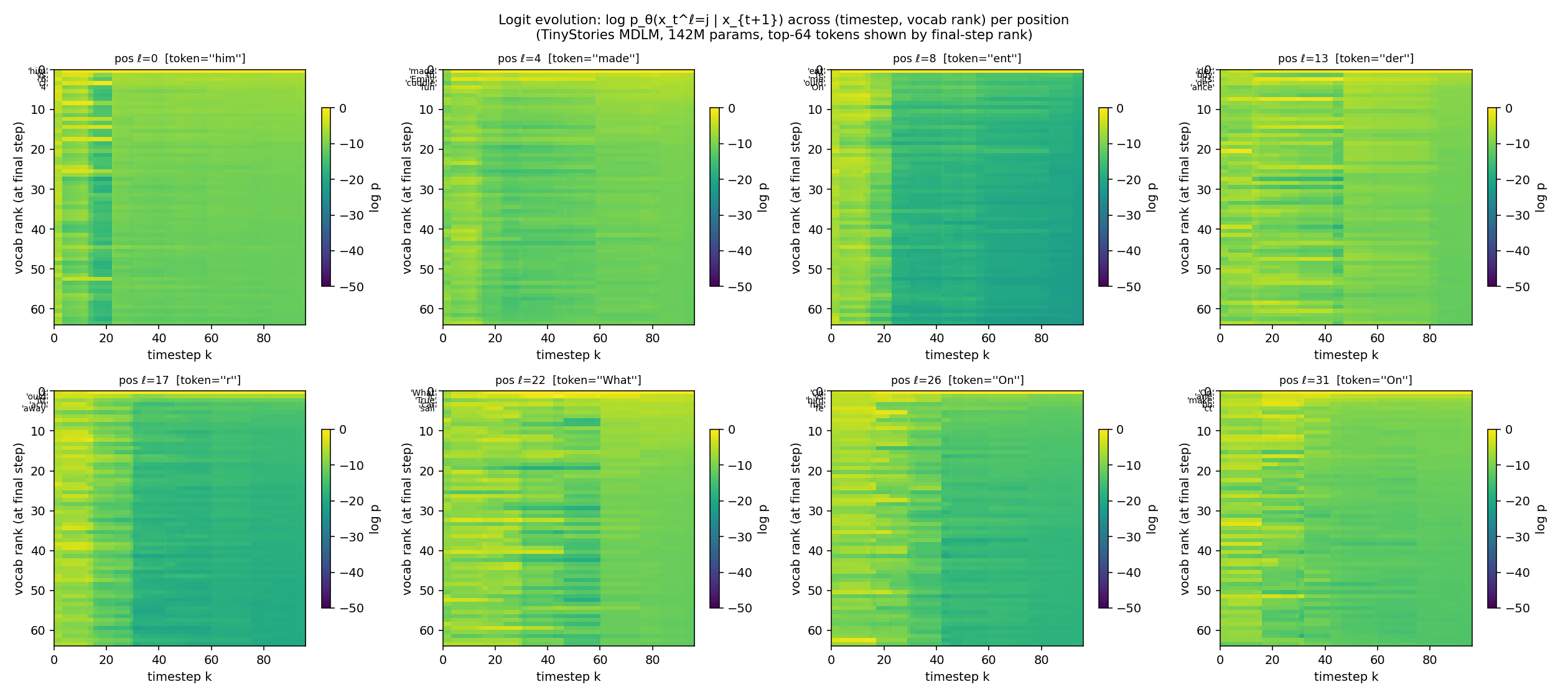}
\caption{Vocabulary-wide logit evolution for eight representative positions. $x$: denoising step $k$. $y$: vocabulary rank sorted by final-step log-probability (top rows = eventually-high-probability tokens). Colour: $\log p_\theta$. Top-$64$ tokens shown per position; rest are uniformly low. Concentration is gradual, supporting the smooth-drift picture of Assumption~\ref{as:temporal_consistency}.}
\label{fig:logit-heatmap}
\end{figure}

Taken together, Figures~\ref{fig:logit-trajectories}--\ref{fig:logit-heatmap} illustrate that (i) logits evolve continuously without large jumps within the reverse process, (ii) different positions follow similar \emph{shape} dynamics but with uncorrelated residuals (consistent with $\bar\rho \approx 0$), and (iii) the per-position factorisation used in our derivation is not merely a convenient idealisation but reflects actual behaviour of a trained MDLM on text.

\paragraph{Cross-domain comparison.} To test whether the factorisation extends beyond text, we repeated the analysis on the SMILES MDLM ($21.9$M params, vocab $59$, $L=72$) used for the molecular experiments. Figure~\ref{fig:temporal-consistency-mol} shows the cross-position covariance of the centred residual $\eta$ for four tokens. In contrast to the near-zero values observed on text ($\bar\rho \approx 10^{-5}$), molecular covariance is substantial and spatially structured: $\bar\rho$ ranges from $0.03$ for the top-1 token up to $0.39$ for sub-top tokens, with a clear banded diagonal indicating that nearby positions have correlated $\eta$. This reflects the strong local constraints of SMILES syntax (valence, ring closures, bracket pairing) which induce inter-position dependence that text does not exhibit. Kurtosis of the centred $\eta$ also rises from $29$ (text) to $80$ (molecular), indicating substantially heavier tails.

\begin{figure}[h]
\centering
\includegraphics[width=0.95\linewidth]{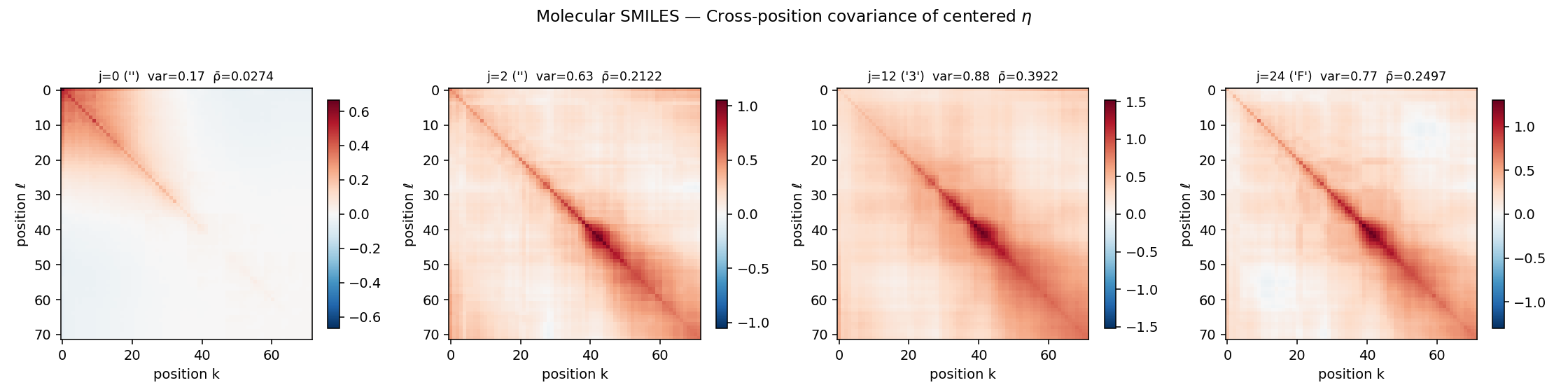}
\caption{Molecular SMILES MDLM: cross-position covariance of centred $\eta$ for four tokens. Unlike text, molecular shows strong local correlation structure ($\bar\rho$ up to $0.39$) consistent with chemistry's valence and ring-closure constraints. The assumption's literal bound on $\rho$ is violated in this domain, though constrained generation still succeeds empirically (Table~\ref{tab:guidance}), suggesting the Statement~\ref{th:regret} bound is loose in practice for structured domains.}
\label{fig:temporal-consistency-mol}
\end{figure}

This does not invalidate our empirical results on molecular (Table~\ref{tab:guidance}), since SPDD still improves constraint satisfaction over unconstrained generation at competitive KL, but it does indicate that the parametric form of Assumption~\ref{as:temporal_consistency} is an approximation, and the theoretical bound in Statement~\ref{th:regret} is likely loose for strongly-structured discrete domains. A principled sharpening of the assumption (e.g., allowing a block-diagonal covariance reflecting local structure) is left for future work.

\paragraph{Does the bound actually hold in practice?} The natural follow-up is whether the violations of the assumption's parametric form translate into violations of the downstream guarantee. Figure~\ref{fig:bound-vs-empirical} compares the empirical per-step objective $\pi_t$ attained by SPDD on a real ocean-steering run (LHS of Statement~\ref{th:regret}) against the theoretical lower bound (RHS), as a function of the denoising step. The bound holds empirically at \textbf{96.9\% of steps}; the few small negative excursions near $t=0$ are order $-3$ in magnitude compared to peak positive slack of ${+}60$. The bound is \emph{loose but valid}: total empirical reward is $-936$ versus a lower bound of $-1281$ (27\% slack). This matches the expected behaviour: assumption violations loosen the worst-case bound, but the empirical trajectory of a trained SPDD sampler still sits comfortably above the guarantee.

\begin{figure}[h]
\centering
\includegraphics[width=0.95\linewidth]{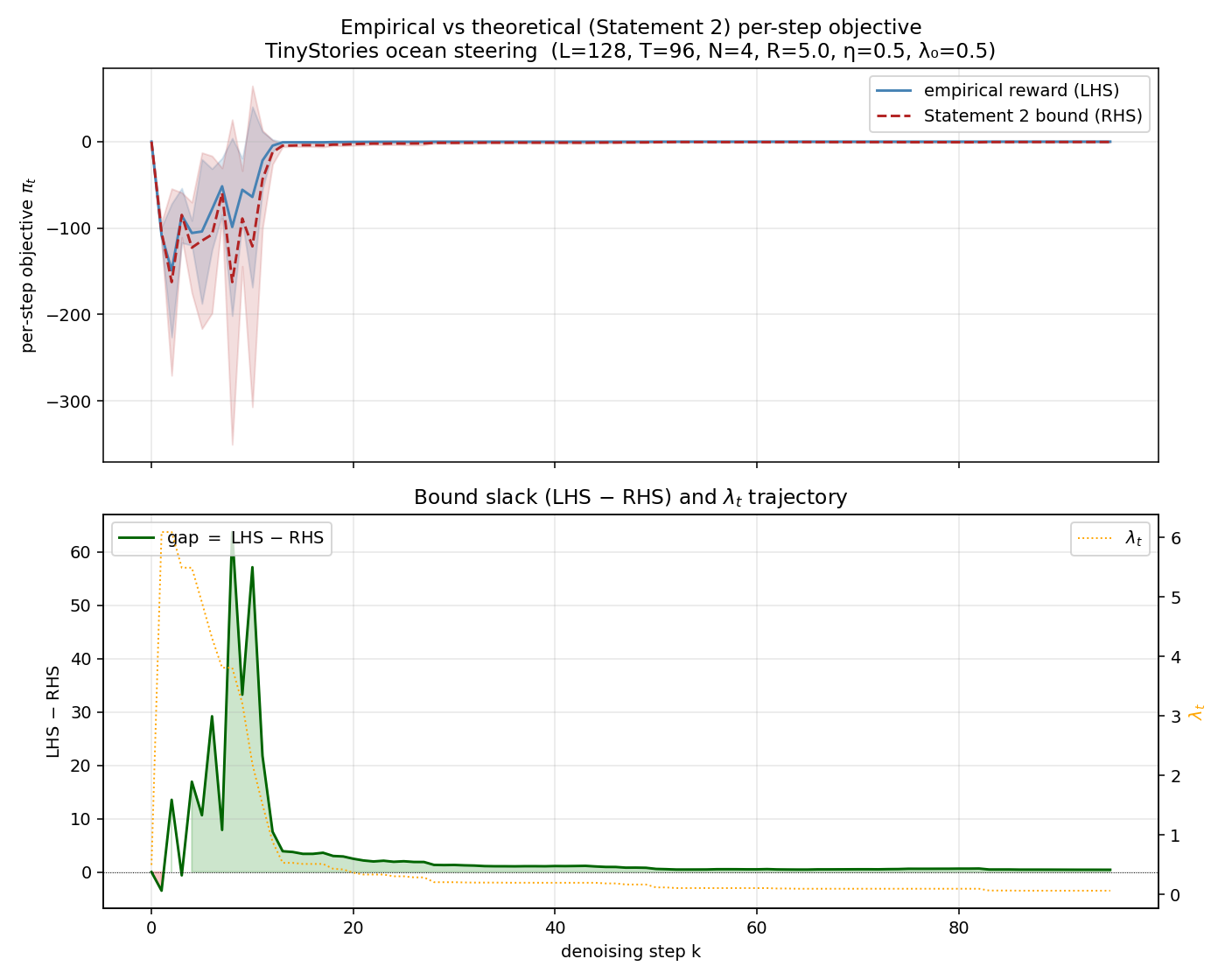}
\caption{Per-step objective $\pi_t$ under SPDD ocean steering ($L{=}128$, $T{=}96$, $N{=}4$, $R{=}5$, $\eta{=}0.5$, $\lambda_0{=}0.5$). Top: empirical reward (blue) and Statement~\ref{th:regret} lower bound (red, dashed) over denoising steps. Bottom: slack (green = bound holds; red = bound violated) and $\lambda_t$ trajectory (orange). The bound holds at 96.9\% of steps; total empirical reward exceeds the bound by 27\%. The dynamics concentrate in the first ${\sim}15$ steps; after constraint satisfaction, $\lambda_t \approx 0$ and both sides collapse to zero.}
\label{fig:bound-vs-empirical}
\end{figure}

Taken together, the analyses in this subsection make three claims at different levels of strength. (i) The cross-position factorisation holds cleanly on text-like domains ($\bar\rho \approx 10^{-5}$) and is substantially weaker on structured domains like molecular ($\bar\rho$ up to $0.39$). (ii) The Gaussian / bounded-$\sigma^2$ parametric form is an approximation: centred residuals have kurtosis $29$ on text and $80$ on molecular, and $\sigma_t$ varies $8\times$ across denoising steps. (iii) Despite (i)--(ii), the Statement~\ref{th:regret} bound is empirically valid on actual SPDD runs at $>$96\% of denoising steps, with substantial slack. The theoretical guarantee is thus robust to the assumption-violations we observe.

\subsection{Zero-Shot Transfer to LLaDA-8B}\label{app:llada}

To test whether SPDD transfers to a substantially larger backbone without modification, we apply the same primal-dual update at decode time to LLaDA-8B~\citep{nie2024llada}, an absorbing-state diffusion model trained at the 8B scale. No fine-tuning or auxiliary training is performed; only the constraint-dependent logit bias is added at each denoising step. Table~\ref{tab:transfer} reports the result on the same TinyStories ocean target ($R{=}10$, $N{=}1{,}000$, sampling temperature $1.0$).

\begin{table}[h]
\centering
\caption{Zero-shot transfer of SPDD to LLaDA-8B on the TinyStories ocean target ($R{=}10$, $N{=}1{,}000$). KL is unigram divergence from LLaDA-8B's own unconstrained sampling at temperature $1.0$.}
\label{tab:transfer}
\begin{tabular}{lccccc}
\toprule
Method & Target (avg) & Pass\% & PPL$\downarrow$ & Dist-2$\uparrow$ & KL$\downarrow$ \\
\midrule
SPDD on LLaDA-8B & 14.1 & 100\phantom{$\pm$00.0}\% & \phantom{0}8.6 & 0.125 & 1.710 \\
\bottomrule
\end{tabular}
\end{table}

The pass rate transfers cleanly: the same per-token scoring interface that drives the trained MDLM also steers a frozen 8B model to satisfy the lexical constraint without any model surgery. The unusually low PPL is misleading on its own. Unconstrained LLaDA-8B at temperature $1.0$ already exhibits repetition under our minimal sampling protocol, and SPDD's bias amplifies this into degenerate sequences (e.g., \emph{ocean ocean ocean \ldots}) that score well under GPT-2 perplexity but score poorly under Dist-2 (0.125, well below the unconstrained MDLM at 0.187). We therefore treat this row as evidence that the per-token scoring interface is backbone-agnostic, not as a competitive fluency claim against the trained MDLM in Table~\ref{tab:ocean}. Closing the diversity gap on large absorbing-state backbones likely requires standard repetition controls (temperature scheduling, frequency penalties, or top-$k$ filtering) layered on top of SPDD; we leave that as future work.

\subsection{Molecular Generation Details}\label{app:mol-details}

Table~\ref{tab:guidance} reports the full MW $\geq 350$ sweep that underlies the molecular column of Table~\ref{tab:unified}. Each method is reported at one representative operating point (the highest Pass\% within a $\geq 50\%$ validity envelope, except for Static where we show three $\alpha$ values to expose the $\alpha$ frontier directly). KL is the unigram KL between each method's samples and the unconstrained samples from the same model. Pass\% (column ``Above $350$'') is the fraction of generated samples whose RDKit-parsed heavy-atom MW is at or above the threshold.

\begin{table}[htbp]
\centering
\caption{Molecular MW $\geq 350$ guidance, $N{=}1{,}000$, 21.9M SMILES backbone. SPDD: accumulated slack, $\eta{=}1.0$, $\lambda_0{=}0.1$, per-position bias (best of a 7-config sweep).}
\label{tab:guidance}
\begin{tabular}{lccccc}
\toprule
Method & Valid & QED & MW (mean) & Above $350$ & KL$\downarrow$ \\
\midrule
Unconstrained & 59.9\% & 0.648 & 330 & 29.40\% & --- \\
Static ($\alpha{=}1$) & 60.9\% & 0.647 & 338 & 31.10\% & 0.005 \\
Static ($\alpha{=}2$) & 59.1\% & 0.649 & 348 & 33.10\% & 0.008 \\
Static ($\alpha{=}5$) & 59.6\% & 0.598 & 382 & 41.40\% & 0.019 \\
D-CBG ($\gamma{=}2$) & 57.4\% & 0.606 & 356 & 35.30\% & 0.014 \\
D-CBG ($\gamma{=}5$) & 54.8\% & 0.544 & 412 & 42.80\% & 0.048 \\
CDD ($\tau{=}0.3$, $o{=}5,i{=}10$) & 57.9\% & 0.617 & 372 & 38.90\% & 0.019 \\
CDD ($\tau{=}0.5$, $o{=}5,i{=}10$) & 59.7\% & 0.624 & 382 & 42.20\% & 0.025 \\
\midrule
SPDD (Ours) & 49.9\% & 0.504 & 473 & 48.50\% & 0.096 \\
\bottomrule
\end{tabular}
\end{table}

\paragraph{Multi-constraint composition.}

A key structural advantage of the primal-dual formulation is native composition of multiple soft above-target objectives: each constraint contributes its own Lagrange multiplier $\lambda_i$ (with its own step size $\eta_i$), and Equation~\ref{eq:sampling} sums their additive logit biases. Tables~\ref{tab:guidance-dual} and~\ref{tab:guidance-dual-tight} evaluate two simultaneous objectives on the same 21.9M backbone, with SPDD taking both objectives jointly via $\{\text{MW}, \#(N+O)\}$ per-token score vectors and per-objective $(\eta_i, \lambda_{0,i})$ rescaled by each score vector's range. We compare against three classes of multi-constraint baselines: \emph{Static bias} adds a scalar over the sum of the two per-token score vectors; \emph{D-CBG (joint)} trains a single classifier on the joint binary label $\big[\text{MW}\geq T_{\text{MW}} \wedge \#(N+O)\geq T_{N{+}O}\big]$; \emph{D-CBG (MW only)} and \emph{CDD} optimise the MW objective alone (the latter has no native multi-constraint extension).

\begin{table}[htbp]
\centering
\caption{Dual above-target guidance: MW~$\geq 350$ AND \#(N+O)~$\geq 5$, $N{=}1{,}000$, 21.9M SMILES backbone. Joint Pass\% requires both constraints. SPDD: best of an 11-config sweep over $\eta{\in}\{0.1, 0.2, 0.5, 1.0, 2.0, 5.0\}$ and $\lambda_0{\in}\{0.05, 0.1, 0.5\}$, per-objective rescaled by score range; the picked config is $\eta{=}0.2$, $\lambda_0{=}0.05$, accumulated slack, per-position bias.}
\label{tab:guidance-dual}
\begin{tabular}{lcccccc}
\toprule
Method & Valid & Joint Pass\% & MW Pass\% & \#(N+O) Pass\% & mean \#(N+O) & KL$\downarrow$ \\
\midrule
Unconstrained                       & 61.3\% & 20.2\% & 21.4\% & 40.3\% & 5.42 & 0.005 \\
Static ($\alpha{=}2$)               & 54.0\% & 27.8\% & 27.8\% & 48.7\% & 8.98 & 0.038 \\
D-CBG ($\gamma{=}5$, MW only)       & 55.2\% & 36.0\% & 38.3\% & 40.9\% & 5.91 & 0.038 \\
D-CBG ($\gamma{=}5$, joint)         & 53.4\% & 36.5\% & 39.0\% & 39.2\% & 5.96 & 0.041 \\
CDD ($\tau{=}0.5$, MW only)         & 56.1\% & 34.2\% & 36.4\% & 46.7\% & 6.49 & 0.033 \\
\midrule
SPDD (Ours)                         & 38.7\% & 38.1\% & 38.1\% & 38.7\% & 7.32 & 0.073 \\
\bottomrule
\end{tabular}
\end{table}

\begin{table}[htbp]
\centering
\caption{Dual above-target guidance with the tighter \#(N+O)~$\geq 8$ second threshold (same MW threshold), $N{=}1{,}000$, 21.9M SMILES backbone. The tight second threshold breaks the natural MW$\leftrightarrow$\#(N+O) correlation: MW-only baselines plateau well below SPDD, and only the \emph{joint-label} D-CBG variant or Static $\alpha{=}2$ remain competitive. SPDD: best of the same 11-config sweep ($\eta{=}0.1$, $\lambda_0{=}0.05$, accumulated, per-position).}
\label{tab:guidance-dual-tight}
\begin{tabular}{lcccccc}
\toprule
Method & Valid & Joint Pass\% & MW Pass\% & \#(N+O) Pass\% & mean \#(N+O) & KL$\downarrow$ \\
\midrule
Unconstrained                       & 61.0\% & 10.4\% & 21.8\% & 13.0\% & 5.41 & 0.004 \\
Static ($\alpha{=}2$)               & 54.9\% & 27.0\% & 28.7\% & 39.9\% & 8.83 & 0.035 \\
D-CBG ($\gamma{=}5$, MW only)       & 52.8\% & 17.1\% & 40.3\% & 17.2\% & 6.16 & 0.044 \\
D-CBG ($\gamma{=}5$, joint)         & 53.7\% & 25.1\% & 34.3\% & 27.9\% & 7.12 & 0.029 \\
CDD ($\tau{=}0.5$, MW only)         & 56.7\% & 16.8\% & 35.7\% & 18.8\% & 6.44 & 0.030 \\
\midrule
SPDD (Ours)                         & 35.9\% & 34.4\% & 35.4\% & 34.9\% & 8.83 & 0.070 \\
\bottomrule
\end{tabular}
\end{table}

At the looser \#(N+O)~$\geq 5$ threshold (Table~\ref{tab:guidance-dual}) the natural correlation between MW and \#(N+O) in ZINC20 lets MW-only baselines satisfy both constraints passively; SPDD reaches the highest joint Pass\% (38.1\%) by a small $\sim 1.5$-point margin over the strongest baseline (D-CBG joint at 36.5\%), at the cost of lower validity (38.7\% vs.\ 53.4\%) since SPDD pushes the distribution further into both constraint regions.

The tighter \#(N+O)~$\geq 8$ regime (Table~\ref{tab:guidance-dual-tight}) breaks the natural correlation. The MW-only baselines that ignored the heteroatom signal collapse to $\sim 17\%$ joint Pass\%, while SPDD continues to lift both axes (joint $34.4\%$, mean \#(N+O)~$\approx 8.8$). The \emph{joint-label} D-CBG variant, given the conjunction as its training signal, recovers some of the gap (joint $25.1\%$ at higher validity than SPDD), making it the strongest of the multi-constraint baselines, but SPDD still wins joint Pass\% by $\sim 9$ points. The validity penalty for SPDD is real: pushing both axes simultaneously strains chemical syntax. Lowering $\eta$ trades joint Pass\% for KL and validity along the same controllable frontier as the single-constraint case; even at the smallest $\eta$ in our sweep ($\eta{=}0.1$), SPDD's KL ($\sim 0.07$) sits roughly $2\times$ above the baselines' KL ($0.03$--$0.04$), reflecting the cost of steering both axes through the model's distribution rather than only one.

The structural advantage of SPDD over D-CBG-joint is composability rather than absolute Pass\%: SPDD adds one Lagrangian multiplier per constraint and uses the existing per-token scoring functions; D-CBG-joint instead requires re-training a classifier each time the constraint set changes, which does not scale gracefully with the number of constraints. CDD has no native joint extension and remains a single-constraint baseline. Per-objective $\eta_i$ and $\lambda_{0,i}$ are essential for SPDD here because the raw score vectors (atomic mass $\sim 12$--$127$ vs.\ binary $\{0,1\}$ for heteroatoms) live on very different scales; without rescaling, the MW objective dominates the dynamics. Extending the framework to less-correlated quantities (e.g., aromatic-ring counts) is a near-term direction.

\paragraph{Structural Constraints.}

Table~\ref{tab:structural} evaluates soft multi-token subsequence guidance using \textbf{SubsequenceScorer}, which biases toward generating a target subsequence (benzene ring \texttt{c1ccccc1}) at an unknown position via context-aware per-position bias scores, unlike hard inpainting which requires known positions. SPDD uses $\eta{=}2.0$, $\lambda_0{=}0.5$, instantaneous slack.

\begin{table}[htbp]
\centering
\caption{Soft substructure guidance via SubsequenceScorer ($N{=}1{,}000$, SMILES, 21.9M backbone). Constraint satisfaction measured as fraction of generated SMILES containing the target subsequence (benzene ring \texttt{c1ccccc1}).}
\label{tab:structural}
\begin{tabular}{lccc}
\toprule
Method & Validity & Benzene rate & Cost \\
\midrule
Unconstrained & 64.4\% & 29.9\% & $1\times$ \\
SPDD SubsequenceScorer & 56.9\% & 40.3\% & $1\times$ \\
\bottomrule
\end{tabular}
\end{table}

SPDD lifts benzene containment by 10 points (29.9\%$\to$40.3\%, $1.35\times$ lift) with modest validity degradation (64.4\%$\to$56.9\%), extending the framework from single-token constraints to multi-token structural motifs at unknown positions.

\paragraph{MW validity trade-off and qualitative examples.}

Sweeping $\eta$ and the slack mode traces a Pareto frontier in Pass\% versus chemical validity: weaker guidance keeps validity near the unconstrained level but undershoots the threshold, while more aggressive guidance lifts Pass\% but degrades validity through malformed SMILES. Table~\ref{tab:guidance} summarises the operating points along this frontier; the corresponding sweep configurations are saved alongside the released results. Figure~\ref{fig:mol-examples} shows the qualitative effect on individual molecules: row (i) spans the natural MW distribution of ZINC20, row (ii) shifts mass into the upper tail while preserving drug-like scaffolds, and row (iii) additionally enforces nitrogen/oxygen richness without visibly collapsing structural diversity.

\begin{figure}[htbp]
\centering
\includegraphics[width=\linewidth]{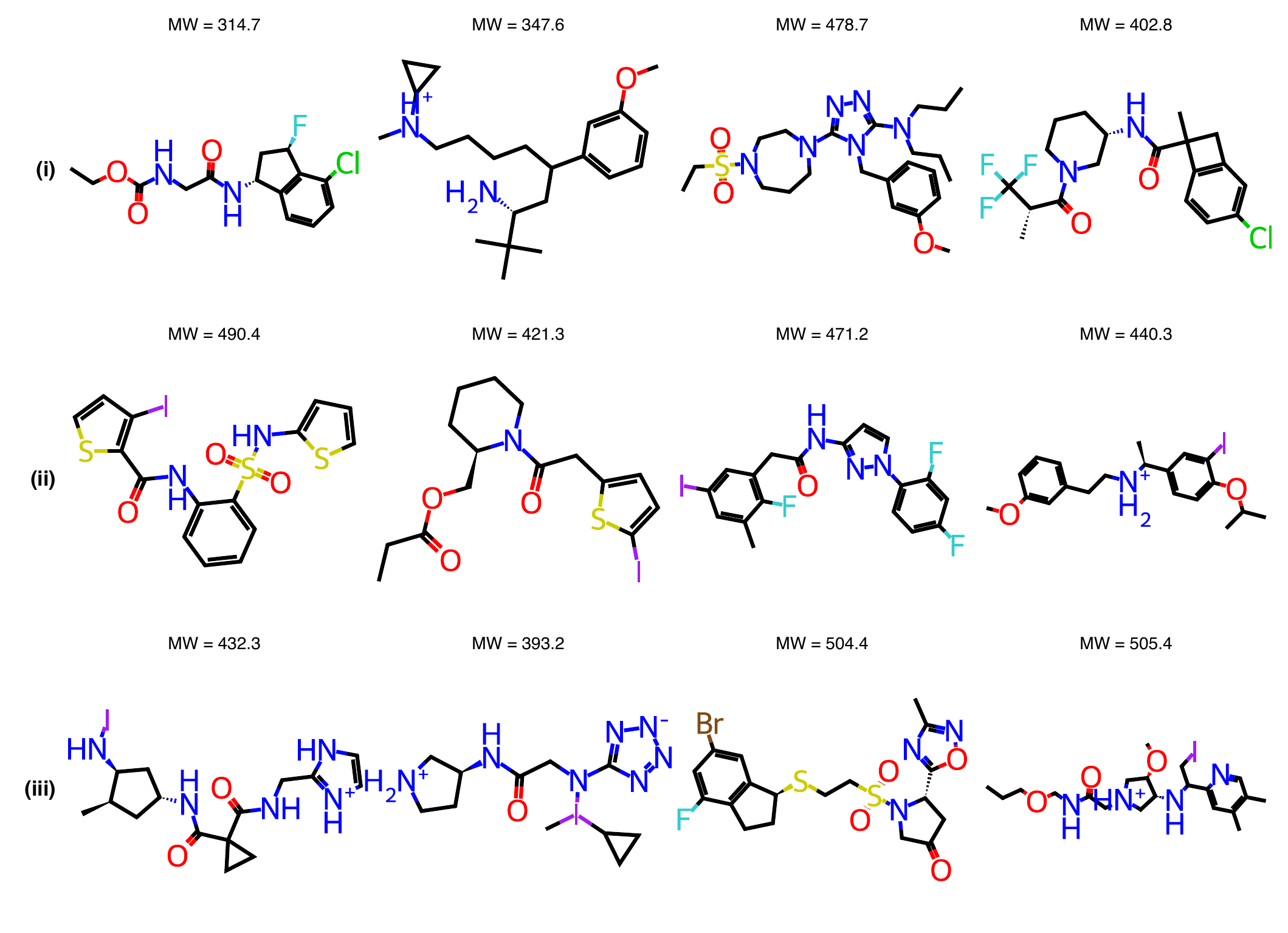}
\caption{Example molecules from the 21.9M SMILES backbone. \textbf{(i)} Unconstrained generation. \textbf{(ii)} SPDD with MW~$\geq 350$\,Da. \textbf{(iii)} SPDD with the dual constraint MW~$\geq 350$\,Da and \#(N+O)~$\geq 5$. Molecular weights are annotated above each structure.}
\label{fig:mol-examples}
\end{figure}

\subsection{Playlist Generation Details}\label{app:playlist-exp-details}

\begin{table}[htbp]
\centering
\small
\caption{Playlist subgenre steering, 5-subgenre mean, $N{=}1{,}000$ sessions per subgenre. NDCG/Recall reported for completeness; see §\ref{sec:playlist} for why they are not the primary metric.}
\label{tab:playlist-results}
\begin{tabular}{lrrrrrr}
\toprule
Method & NDCG@30 & Recall@30 & Sat.\%$\uparrow$ & Div.$\uparrow$ & IDist.$\uparrow$ & KL$\downarrow$ \\
\midrule
Unconstrained & 0.378 & 0.372 & \phantom{0}1.1\% & 0.497 & 0.987 & 0.000 \\
Static ($\alpha{=}2.0$) & 0.504 & 0.372 & \phantom{0}1.7\% & 0.494 & 0.987 & 0.006 \\
D-CBG ($\gamma{=}5$) & 0.488 & 0.354 & \phantom{0}1.9\% & 0.481 & 0.979 & 0.118 \\
CDD ($\tau{=}0.5$) & 0.439 & 0.299 & \phantom{0}9.2\% & 0.368 & 0.986 & 0.782 \\
\midrule
SPDD (Ours, $\eta{=}0.2$, frontload) & 0.306 & 0.357 & \phantom{0}9.3\% & 0.513 & 0.969 & 0.355 \\
\bottomrule
\end{tabular}
\end{table}

\paragraph{Setup Details.}

Each token inherits genre metadata from its member tracks. For a target genre $g$, we compute per-token scores as the fraction of member tracks tagged with $g$ (exact match):
\begin{equation}
    b_g(j) = \frac{|\{t \in \text{members}(j) : g \in \text{genres}(t)\}|}{|\text{members}(j)|}
\end{equation}
We target specific subgenres rather than broad categories. Because tracks carry multiple genre tags and LSH clusters mix tags, a single statistic does not capture how the constraint signal is distributed across the vocabulary. We therefore report two metrics per genre (Table~\ref{tab:playlist-coverage}): the \emph{breadth} of the signal (the fraction of tokens with $b_g(j){>}0$), and its \emph{concentration} (the mean of $b_g(j)$ across the vocabulary, plus the fraction of tokens with $b_g(j){\geq}0.5$, i.e., clusters whose majority is tagged $g$). The two metrics decouple sharply: 19\% of tokens carry \emph{some} k-pop signal, but only 0.5\% of tokens are majority-k-pop, and the average per-token score is 0.019 (i.e., a typical cluster has ${\sim}1$--$2$ k-pop tracks out of ${\sim}50$). Sparse genres are sparse on \emph{both} axes: thrash metal has 2.4\% breadth and 0.0022 mean. The breadth percentages overlap across genres and need not sum to $\leq 100\%$, but the concentration metrics do not have this property and remain meaningful as absolute discretisation statistics. Breadth captures how many tokens the constraint touches at all; concentration captures how strongly each cluster is dominated by the target subgenre. The two metrics decouple sharply because LSH clusters mix tags.

\begin{table}[htbp]
\centering
\small
\caption{Per-genre coverage statistics on the 3{,}887-token playlist vocabulary. \emph{Breadth} ($b_g{>}0$) measures how widely the constraint signal is spread; \emph{Concentration} (mean $b_g$, fraction of tokens with $b_g{\geq}0.5$, max $b_g$) measures how strongly any single token leans toward the genre. The signal is broadly diffuse ($\geq80\%$ of tokens have $b_g{=}0$ on every genre, and only $0.1$--$0.5\%$ of tokens are majority-genre), reflecting the multi-tag nature of editorial track metadata under LSH clustering of audio embeddings.}
\label{tab:playlist-coverage}
\begin{tabular}{lrrrrr}
\toprule
Genre & $|b_g{>}0|$ & $\mathbb{E}\,b_g$ & $|b_g{\geq}0.1|$ & $|b_g{\geq}0.5|$ & $\max b_g$ \\
\midrule
reggaeton    & 18.9\% & 0.0169 & 5.7\% & 0.3\% & 1.00 \\
k-pop        & 18.7\% & 0.0185 & 5.8\% & 0.5\% & 1.00 \\
bossa nova   &  9.8\% & 0.0029 & 0.6\% & 0.1\% & 0.50 \\
synthwave    &  5.3\% & 0.0023 & 0.6\% & 0.1\% & 0.85 \\
thrash metal &  2.4\% & 0.0022 & 0.6\% & 0.1\% & 1.00 \\
\bottomrule
\end{tabular}
\end{table}

We report \emph{bucket-level} and \emph{track-level} genre satisfaction (the latter after nearest-centroid track selection), embedding coherence, token diversity, and inter-playlist Jaccard distance.

\paragraph{Additional Subgenre Results.}

Table~\ref{tab:playlist-results-full} reports appendix exact-match runs with an expanded metric set for k-pop (19\% token coverage) and synthwave (8\% coverage).

\begin{table}[htbp]
\centering
\caption{Additional playlist subgenre results ($N{=}200$, exact match). Bucket and Track values are percentages; Coher.\ is embedding coherence, Div.\ is token diversity, and IDist.\ is inter-playlist Jaccard diversity.}
\label{tab:playlist-results-full}
\begin{tabular}{llrrrrr}
\toprule
Genre & Method & Bucket (\%)$\uparrow$ & Track (\%)$\uparrow$ & Coher. & Div. & IDist. \\
\midrule
\multirow{5}{*}{k-pop} & Unconstrained & 51.0 & 5.2\tiny$\pm$1.1 & 0.678 & 0.475 & 0.977 \\
& Static ($\alpha{=}5$) & 67.1 & 20.7\tiny$\pm$3.1 & 0.691 & 0.452 & 0.967 \\
& D-CBG ($\gamma{=}5$) & 94.0 & 6.1 & 0.814 & 0.317 & 0.861 \\
& CDD ($\tau{=}0.5$) & 97.8 & 50.3 & 0.930 & 0.090 & 0.910 \\
& SPDD ($\eta{=}5$) & 79.6 & 36.7\tiny$\pm$3.1 & 0.716 & 0.380 & 0.925 \\
\midrule
\multirow{5}{*}{synthwave} & Unconstrained & 36.0 & 1.0\tiny$\pm$0.4 & 0.678 & 0.475 & 0.977 \\
& Static ($\alpha{=}5$) & 48.5 & 3.4\tiny$\pm$0.8 & 0.672 & 0.387 & 0.971 \\
& D-CBG ($\gamma{=}5$) & 93.4 & 7.8 & 0.806 & 0.218 & 0.879 \\
& CDD ($\tau{=}0.5$) & 100.0 & 24.8 & 0.998 & 0.021 & 0.028 \\
& SPDD ($\eta{=}5$) & 73.6 & 20.9\tiny$\pm$2.9 & 0.619 & 0.329 & 0.952 \\
\bottomrule
\multicolumn{7}{l}{\footnotesize Requires training a separate classifier per genre ($2\times$ cost for D-CBG, $5$--$20\times$ for CDD).}
\end{tabular}
\end{table}

These appendix rows expose the same satisfaction--diversity tension with a wider metric set. On synthwave, CDD achieves 100\% bucket satisfaction but diversity drops to 0.02 with inter-playlist Jaccard distance of 0.03: all playlists are nearly identical. SPDD reaches lower track satisfaction but maintains substantially higher diversity and inter-playlist distance.

\paragraph{Slack modes and $\lambda$ dynamics.}\label{app:playlist-slack-modes}

The slack-mode pattern observed on text generalises cleanly to playlists (Table~\ref{tab:playlist-slack}). Instantaneous slack maximises track satisfaction (81.7\% at $\eta{=}5.0$) but compresses diversity (0.33$\to$0.21 as $\eta$ rises); accumulated slack sacrifices peak satisfaction for monotonic controllability (7.6\%$\to$46.8\%) while preserving diversity (${\sim}0.46$); optimistic (OMD) slack tracks instantaneous at high $\eta$ (63.4\% at $\eta{=}0.2$, 81.0\% at $\eta{=}5.0$). On the harder thrash-metal setting (3.3\% coverage) the same ordering holds (instantaneous 24.2\%, early 19.0\%, optimistic 14.7\%, accumulated 17.1\% at $\eta{=}5$), confirming the pattern is genre-independent. Per-step dynamics (Figure~\ref{fig:playlist-trace}) explain the trade-off: instantaneous slack pushes $\lambda$ to the clamp ceiling immediately, committing all target tokens in the first ${\sim}20$ denoising steps and leaving no room for diversity, whereas optimistic slack rises gradually as the model's forecast $\hat{E}_k$ absorbs context.

\paragraph{Front-loaded positional bias.}\label{app:playlist-frontload}
Table~\ref{tab:playlist-frontload-full} reports a full $\eta$ sweep for front-loaded positional bias across all five paper subgenres ($N{=}1{,}000$ held-out sessions per genre, mean across genres). The pattern is consistent across $\eta$: front-loading concentrates the genre signal at the top of the playlist (TrGNDCG $0.77$--$0.82$ vs.\ uniform's $0.54$--$0.67$, a ${\sim}25$pp lift) and slightly improves Recall@30 at $\eta{=}0.5$. The cost is concentrated at NDCG@5: the front-loaded variant places the catalogue's most-genre tokens at positions 1--5, which often differ from the held-out user's specific tracks (NDCG@5 drops from ${\sim}0.10$--$0.19$ to $0.03$--$0.05$). NDCG@30 is somewhat lower at $\eta{=}0.5$ (0.252 vs.\ uniform's 0.295) but the gap narrows as $\eta$ grows. The variant is therefore the right knob when genre purity at the playlist head matters more than personalised matching to a specific user's listening history (themed/radio playlists, mood-based generation).

\begin{table}[htbp]
\centering
\small
\caption{Front-loaded vs.\ uniform SPDD bias, $\eta$ sweep, mean across reggaeton, k-pop, synthwave, bossa nova, and thrash metal ($N{=}1{,}000$ held-out sessions per genre). \emph{TrGNDCG} (track-level genre NDCG) measures whether target-genre tracks land at high-ranked positions. NDCG@5/@10/@30 measure recommendation relevance against the held-out user continuation at three cutoffs; Sat.\% is bucket-level target-genre satisfaction.}
\label{tab:playlist-frontload-full}
\begin{tabular}{llrrrrrr}
\toprule
Variant & $\eta$ & NDCG@5$\uparrow$ & NDCG@10$\uparrow$ & NDCG@30$\uparrow$ & Recall@30$\uparrow$ & Sat.\%$\uparrow$ & TrGNDCG$\uparrow$ \\
\midrule
SPDD uniform   & 0.5 & 0.189 & 0.235 & \textbf{0.295} & 0.283 & 67.0\% & 0.543 \\
SPDD uniform   & 1.0 & 0.134 & 0.165 & 0.194 & 0.230 & 76.4\% & 0.605 \\
SPDD uniform   & 2.0 & 0.117 & 0.138 & 0.162 & 0.212 & 78.0\% & 0.632 \\
SPDD uniform   & 5.0 & 0.098 & 0.117 & 0.141 & 0.203 & 77.9\% & 0.669 \\
\midrule
SPDD frontload & 0.5 & 0.052 & 0.093 & 0.252 & \textbf{0.296} & 65.0\% & \textbf{0.823} \\
SPDD frontload & 1.0 & 0.025 & 0.054 & 0.152 & 0.230 & 75.4\% & 0.815 \\
SPDD frontload & 2.0 & 0.040 & 0.065 & 0.128 & 0.212 & 77.4\% & 0.795 \\
SPDD frontload & 5.0 & 0.054 & 0.078 & 0.125 & 0.203 & 77.7\% & 0.765 \\
\bottomrule
\end{tabular}
\end{table}

\begin{table}[htbp]
\centering
\caption{Slack mode ablation for playlist generation (reggaeton, per-position $\lambda$, $N{=}200$).}
\label{tab:playlist-slack}
\begin{tabular}{llrr}
\toprule
Slack mode & $\eta$ & Track (\%)$\uparrow$ & Div. \\
\midrule
Accumulated & 0.2 & 7.6 & 0.476 \\
Accumulated & 1.0 & 39.5 & 0.447 \\
Accumulated & 5.0 & 46.8 & 0.455 \\
\midrule
Instantaneous & 0.2 & 65.8 & 0.331 \\
Instantaneous & 1.0 & 80.7 & 0.210 \\
Instantaneous & 5.0 & 81.7 & 0.206 \\
\midrule
Early & 0.2 & 51.9 & 0.392 \\
Early & 1.0 & 75.6 & 0.256 \\
Early & 5.0 & 76.8 & 0.238 \\
\midrule
Optimistic (OMD) & 0.2 & 63.4 & 0.346 \\
Optimistic (OMD) & 1.0 & 79.8 & 0.217 \\
Optimistic (OMD) & 5.0 & 81.0 & 0.206 \\
\bottomrule
\end{tabular}
\end{table}

\begin{figure}[htbp]
\centering
\includegraphics[width=0.85\linewidth]{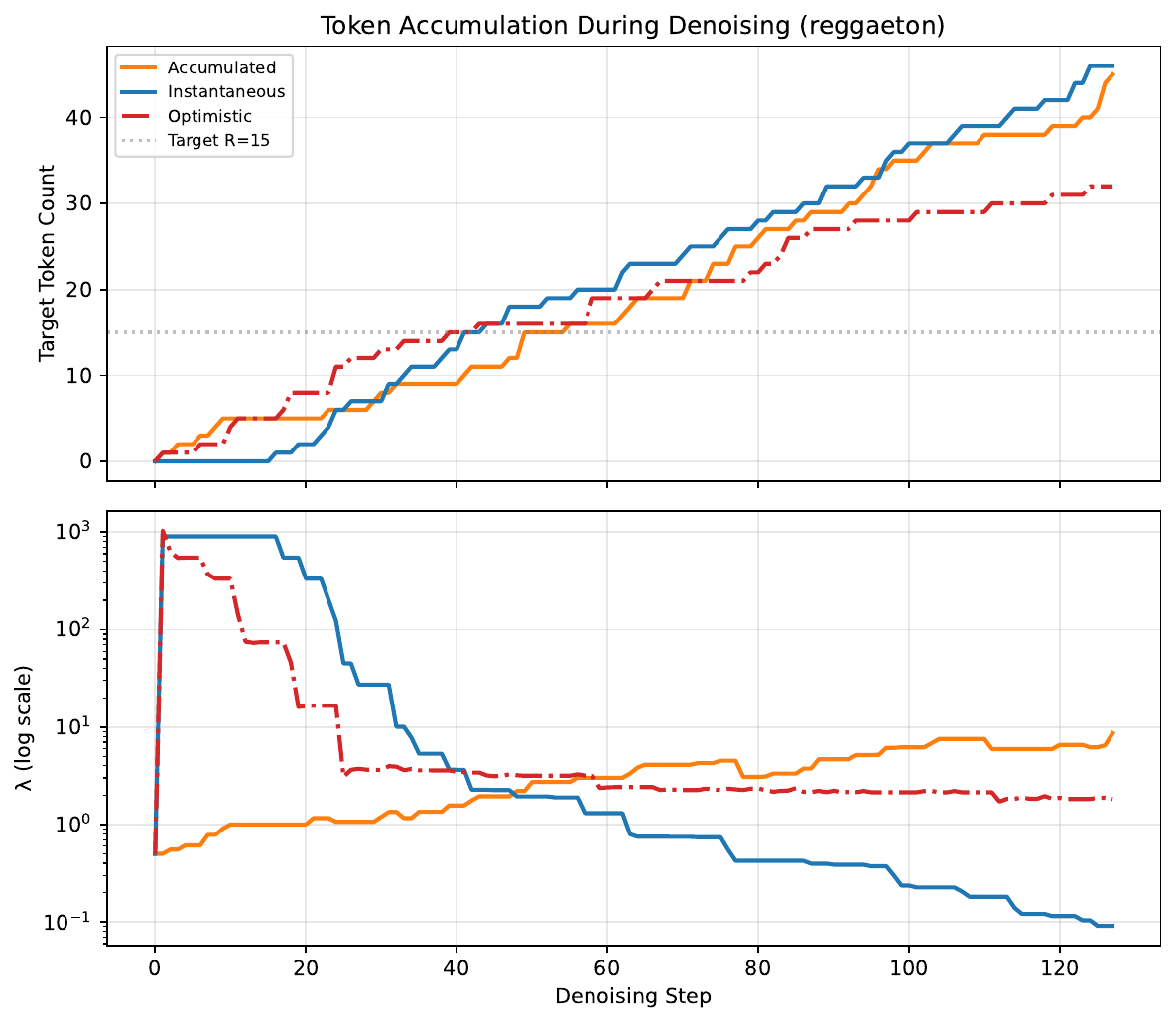}
\caption{Per-step dynamics during playlist generation (reggaeton, $\eta{=}0.5$, single sample). Top: target token accumulation. Bottom: $\lambda$ trajectory (log scale). Instantaneous slack drives $\lambda$ to the clamp ceiling immediately, front-loading all target tokens in the first ${\sim}20$ steps. Optimistic (OMD) slack rises gradually as the model's forecast $\hat{E}_k$ absorbs context.}
\label{fig:playlist-trace}
\end{figure}

\paragraph{Further observations.}

Embedding coherence follows a U-shape with increasing $\eta$: it dips at moderate guidance (0.65 at $\eta{=}0.5$) as the model explores genre-adjacent buckets, then recovers at strong guidance (0.73 at $\eta{=}5$) as it concentrates on high-scoring buckets; for bossa nova (9\% token coverage), diversity drops to 0.16 at $\eta{=}2$, reflecting the tight token pool. Per-position $\lambda$ outperforms scalar $\lambda$ by $+6$--$14\%$ relative (at $\eta{=}1.0$, 31.2\% vs.\ 27.3\%; at $\eta{=}5.0$, 34.1\% vs.\ 32.2\%); the advantage is smaller than on text because playlist sequences are shorter ($L{=}50$ vs.\ $L{=}256$). Artist diversity degrades moderately under strong guidance: unconstrained playlists average 43.6\% unique artists; SPDD ($\eta{=}5$) drops to 34.2\% for reggaeton and 30.5\% for thrash metal, reflecting the narrower candidate pool.

\newpage

\section*{NeurIPS Paper Checklist}

\begin{enumerate}

\item {\bf Claims}
    \item[] Question: Do the main claims made in the abstract and introduction accurately reflect the paper's contributions and scope?
    \item[] Answer: \answerYes{}
    \item[] Justification: The abstract claims inference-time constrained generation at $1\times$ cost with formal bounds and cross-domain generality. These are supported by Statement~\ref{th:cv} (theoretical bound), Algorithm~\ref{alg:pd} (method), and experiments across three domains (Section~5).
    \item[] Guidelines:
    \begin{itemize}
        \item The answer \answerNA{} means that the abstract and introduction do not include the claims made in the paper.
        \item The abstract and/or introduction should clearly state the claims made, including the contributions made in the paper and important assumptions and limitations. A \answerNo{} or \answerNA{} answer to this question will not be perceived well by the reviewers.
        \item The claims made should match theoretical and experimental results, and reflect how much the results can be expected to generalize to other settings.
        \item It is fine to include aspirational goals as motivation as long as it is clear that these goals are not attained by the paper.
    \end{itemize}

\item {\bf Limitations}
    \item[] Question: Does the paper discuss the limitations of the work performed by the authors?
    \item[] Answer: \answerYes{}
    \item[] Justification: The Limitations paragraph in the Discussion section addresses irrevocable token commitments in absorbing-state diffusion, noisy slack estimation for non-decomposable constraints, and the lack of hard satisfaction guarantees.
    \item[] Guidelines:
    \begin{itemize}
        \item The answer \answerNA{} means that the paper has no limitation while the answer \answerNo{} means that the paper has limitations, but those are not discussed in the paper.
        \item The authors are encouraged to create a separate ``Limitations'' section in their paper.
        \item The paper should point out any strong assumptions and how robust the results are to violations of these assumptions (e.g., independence assumptions, noiseless settings, model well-specification, asymptotic approximations only holding locally). The authors should reflect on how these assumptions might be violated in practice and what the implications would be.
        \item The authors should reflect on the scope of the claims made, e.g., if the approach was only tested on a few datasets or with a few runs. In general, empirical results often depend on implicit assumptions, which should be articulated.
        \item The authors should reflect on the factors that influence the performance of the approach. For example, a facial recognition algorithm may perform poorly when image resolution is low or images are taken in low lighting. Or a speech-to-text system might not be used reliably to provide closed captions for online lectures because it fails to handle technical jargon.
        \item The authors should discuss the computational efficiency of the proposed algorithms and how they scale with dataset size.
        \item If applicable, the authors should discuss possible limitations of their approach to address problems of privacy and fairness.
        \item While the authors might fear that complete honesty about limitations might be used by reviewers as grounds for rejection, a worse outcome might be that reviewers discover limitations that aren't acknowledged in the paper. The authors should use their best judgment and recognize that individual actions in favor of transparency play an important role in developing norms that preserve the integrity of the community. Reviewers will be specifically instructed to not penalize honesty concerning limitations.
    \end{itemize}

\item {\bf Theory assumptions and proofs}
    \item[] Question: For each theoretical result, does the paper provide the full set of assumptions and a complete (and correct) proof?
    \item[] Answer: \answerYes{}
    \item[] Justification: Statement~\ref{th:cv} provides a constraint violation bound under explicitly stated assumptions (Assumptions~\ref{as:markov} and~\ref{as:absorbing}). The full proof is in Appendix~\ref{app:proofs}.
    \item[] Guidelines:
    \begin{itemize}
        \item The answer \answerNA{} means that the paper does not include theoretical results.
        \item All the theorems, formulas, and proofs in the paper should be numbered and cross-referenced.
        \item All assumptions should be clearly stated or referenced in the statement of any theorems.
        \item The proofs can either appear in the main paper or the supplemental material, but if they appear in the supplemental material, the authors are encouraged to provide a short proof sketch to provide intuition.
        \item Inversely, any informal proof provided in the core of the paper should be complemented by formal proofs provided in appendix or supplemental material.
        \item Theorems and Lemmas that the proof relies upon should be properly referenced.
    \end{itemize}

    \item {\bf Experimental result reproducibility}
    \item[] Question: Does the paper fully disclose all the information needed to reproduce the main experimental results of the paper to the extent that it affects the main claims and/or conclusions of the paper (regardless of whether the code and data are provided or not)?
    \item[] Answer: \answerYes{}
    \item[] Justification: Model architectures, hyperparameters ($\eta$, $\lambda_0$, slack modes), training details, and dataset descriptions are provided in the experiment sections and appendix. Algorithm~\ref{alg:pd} fully specifies the method.
    \item[] Guidelines:
    \begin{itemize}
        \item The answer \answerNA{} means that the paper does not include experiments.
        \item If the paper includes experiments, a \answerNo{} answer to this question will not be perceived well by the reviewers: Making the paper reproducible is important, regardless of whether the code and data are provided or not.
        \item If the contribution is a dataset and\slash or model, the authors should describe the steps taken to make their results reproducible or verifiable.
        \item Depending on the contribution, reproducibility can be accomplished in various ways. For example, if the contribution is a novel architecture, describing the architecture fully might suffice, or if the contribution is a specific model and empirical evaluation, it may be necessary to either make it possible for others to replicate the model with the same dataset, or provide access to the model. In general. releasing code and data is often one good way to accomplish this, but reproducibility can also be provided via detailed instructions for how to replicate the results, access to a hosted model (e.g., in the case of a large language model), releasing of a model checkpoint, or other means that are appropriate to the research performed.
        \item While NeurIPS does not require releasing code, the conference does require all submissions to provide some reasonable avenue for reproducibility, which may depend on the nature of the contribution. For example
        \begin{enumerate}
            \item If the contribution is primarily a new algorithm, the paper should make it clear how to reproduce that algorithm.
            \item If the contribution is primarily a new model architecture, the paper should describe the architecture clearly and fully.
            \item If the contribution is a new model (e.g., a large language model), then there should either be a way to access this model for reproducing the results or a way to reproduce the model (e.g., with an open-source dataset or instructions for how to construct the dataset).
            \item We recognize that reproducibility may be tricky in some cases, in which case authors are welcome to describe the particular way they provide for reproducibility. In the case of closed-source models, it may be that access to the model is limited in some way (e.g., to registered users), but it should be possible for other researchers to have some path to reproducing or verifying the results.
        \end{enumerate}
    \end{itemize}

\item {\bf Open access to data and code}
    \item[] Question: Does the paper provide open access to the data and code, with sufficient instructions to faithfully reproduce the main experimental results, as described in supplemental material?
    \item[] Answer: \answerNo{}
    \item[] Justification: The text (TinyStories) and molecular (ZINC20) datasets are publicly available. The playlist dataset is proprietary.
    \item[] Guidelines:
    \begin{itemize}
        \item The answer \answerNA{} means that paper does not include experiments requiring code.
        \item Please see the NeurIPS code and data submission guidelines (\url{https://neurips.cc/public/guides/CodeSubmissionPolicy}) for more details.
        \item While we encourage the release of code and data, we understand that this might not be possible, so \answerNo{} is an acceptable answer. Papers cannot be rejected simply for not including code, unless this is central to the contribution (e.g., for a new open-source benchmark).
        \item The instructions should contain the exact command and environment needed to run to reproduce the results. See the NeurIPS code and data submission guidelines (\url{https://neurips.cc/public/guides/CodeSubmissionPolicy}) for more details.
        \item The authors should provide instructions on data access and preparation, including how to access the raw data, preprocessed data, intermediate data, and generated data, etc.
        \item The authors should provide scripts to reproduce all experimental results for the new proposed method and baselines. If only a subset of experiments are reproducible, they should state which ones are omitted from the script and why.
        \item At submission time, to preserve anonymity, the authors should release anonymized versions (if applicable).
        \item Providing as much information as possible in supplemental material (appended to the paper) is recommended, but including URLs to data and code is permitted.
    \end{itemize}

\item {\bf Experimental setting/details}
    \item[] Question: Does the paper specify all the training and test details (e.g., data splits, hyperparameters, how they were chosen, type of optimizer) necessary to understand the results?
    \item[] Answer: \answerYes{}
    \item[] Justification: Training details (optimizer, learning rate, epochs, batch size, GPU type) are provided in each experiment section. Hyperparameter choices ($\eta$, $\lambda_0$) are reported in table captions and the appendix includes ablation studies.
    \item[] Guidelines:
    \begin{itemize}
        \item The answer \answerNA{} means that the paper does not include experiments.
        \item The experimental setting should be presented in the core of the paper to a level of detail that is necessary to appreciate the results and make sense of them.
        \item The full details can be provided either with the code, in appendix, or as supplemental material.
    \end{itemize}

\item {\bf Experiment statistical significance}
    \item[] Question: Does the paper report error bars suitably and correctly defined or other appropriate information about the statistical significance of the experiments?
    \item[] Answer: \answerYes{}
    \item[] Justification: Playlist and text experiments report standard errors ($\pm$ SE) computed across per-sample satisfaction. Statistical significance is reported via Mann-Whitney U tests ($p < 10^{-20}$) for playlist comparisons.
    \item[] Guidelines:
    \begin{itemize}
        \item The answer \answerNA{} means that the paper does not include experiments.
        \item The authors should answer \answerYes{} if the results are accompanied by error bars, confidence intervals, or statistical significance tests, at least for the experiments that support the main claims of the paper.
        \item The factors of variability that the error bars are capturing should be clearly stated (for example, train/test split, initialization, random drawing of some parameter, or overall run with given experimental conditions).
        \item The method for calculating the error bars should be explained (closed form formula, call to a library function, bootstrap, etc.)
        \item The assumptions made should be given (e.g., Normally distributed errors).
        \item It should be clear whether the error bar is the standard deviation or the standard error of the mean.
        \item It is OK to report 1-sigma error bars, but one should state it. The authors should preferably report a 2-sigma error bar than state that they have a 96\% CI, if the hypothesis of Normality of errors is not verified.
        \item For asymmetric distributions, the authors should be careful not to show in tables or figures symmetric error bars that would yield results that are out of range (e.g., negative error rates).
        \item If error bars are reported in tables or plots, the authors should explain in the text how they were calculated and reference the corresponding figures or tables in the text.
    \end{itemize}

\item {\bf Experiments compute resources}
    \item[] Question: For each experiment, does the paper provide sufficient information on the computer resources (type of compute workers, memory, time of execution) needed to reproduce the experiments?
    \item[] Answer: \answerYes{}
    \item[] Justification: All experiments were run on RTX Pro 6000 GPUs with DDP. The method runs at $1\times$ cost (same as unconstrained sampling). Baseline costs ($2\times$ for D-CBG, $5$--$20\times$ for CDD) are reported.
    \item[] Guidelines:
    \begin{itemize}
        \item The answer \answerNA{} means that the paper does not include experiments.
        \item The paper should indicate the type of compute workers CPU or GPU, internal cluster, or cloud provider, including relevant memory and storage.
        \item The paper should provide the amount of compute required for each of the individual experimental runs as well as estimate the total compute.
        \item The paper should disclose whether the full research project required more compute than the experiments reported in the paper (e.g., preliminary or failed experiments that didn't make it into the paper).
    \end{itemize}

\item {\bf Code of ethics}
    \item[] Question: Does the research conducted in the paper conform, in every respect, with the NeurIPS Code of Ethics \url{https://neurips.cc/public/EthicsGuidelines}?
    \item[] Answer: \answerYes{}
    \item[] Justification: The research uses publicly available datasets (TinyStories, ZINC20) and a proprietary playlist dataset with appropriate internal approvals. No human subjects are involved.
    \item[] Guidelines:
    \begin{itemize}
        \item The answer \answerNA{} means that the authors have not reviewed the NeurIPS Code of Ethics.
        \item If the authors answer \answerNo, they should explain the special circumstances that require a deviation from the Code of Ethics.
        \item The authors should make sure to preserve anonymity (e.g., if there is a special consideration due to laws or regulations in their jurisdiction).
    \end{itemize}

\item {\bf Broader impacts}
    \item[] Question: Does the paper discuss both potential positive societal impacts and negative societal impacts of the work performed?
    \item[] Answer: \answerNA{}
    \item[] Justification: This is a foundational method for constrained discrete generation. The primary applications (molecular design, text generation, playlist curation) do not raise direct negative societal concerns beyond those inherent in generative models generally.
    \item[] Guidelines:
    \begin{itemize}
        \item The answer \answerNA{} means that there is no societal impact of the work performed.
        \item If the authors answer \answerNA{} or \answerNo, they should explain why their work has no societal impact or why the paper does not address societal impact.
        \item Examples of negative societal impacts include potential malicious or unintended uses (e.g., disinformation, generating fake profiles, surveillance), fairness considerations (e.g., deployment of technologies that could make decisions that unfairly impact specific groups), privacy considerations, and security considerations.
        \item The conference expects that many papers will be foundational research and not tied to particular applications, let alone deployments. However, if there is a direct path to any negative applications, the authors should point it out. For example, it is legitimate to point out that an improvement in the quality of generative models could be used to generate Deepfakes for disinformation. On the other hand, it is not needed to point out that a generic algorithm for optimizing neural networks could enable people to train models that generate Deepfakes faster.
        \item The authors should consider possible harms that could arise when the technology is being used as intended and functioning correctly, harms that could arise when the technology is being used as intended but gives incorrect results, and harms following from (intentional or unintentional) misuse of the technology.
        \item If there are negative societal impacts, the authors could also discuss possible mitigation strategies (e.g., gated release of models, providing defenses in addition to attacks, mechanisms for monitoring misuse, mechanisms to monitor how a system learns from feedback over time, improving the efficiency and accessibility of ML).
    \end{itemize}

\item {\bf Safeguards}
    \item[] Question: Does the paper describe safeguards that have been put in place for responsible release of data or models that have a high risk for misuse (e.g., pre-trained language models, image generators, or scraped datasets)?
    \item[] Answer: \answerNA{}
    \item[] Justification: The method is an inference-time decoding strategy, not a new pretrained model. The models used are small (21M--142M parameters) and domain-specific, posing no meaningful misuse risk.
    \item[] Guidelines:
    \begin{itemize}
        \item The answer \answerNA{} means that the paper poses no such risks.
        \item Released models that have a high risk for misuse or dual-use should be released with necessary safeguards to allow for controlled use of the model, for example by requiring that users adhere to usage guidelines or restrictions to access the model or implementing safety filters.
        \item Datasets that have been scraped from the Internet could pose safety risks. The authors should describe how they avoided releasing unsafe images.
        \item We recognize that providing effective safeguards is challenging, and many papers do not require this, but we encourage authors to take this into account and make a best faith effort.
    \end{itemize}

\item {\bf Licenses for existing assets}
    \item[] Question: Are the creators or original owners of assets (e.g., code, data, models), used in the paper, properly credited and are the license and terms of use explicitly mentioned and properly respected?
    \item[] Answer: \answerYes{}
    \item[] Justification: TinyStories~\cite{eldan2023tinystories} and ZINC20~\cite{irwin2020zinc20} are cited with their original papers. Baseline methods (D-CBG~\cite{nisonoff2025unlocking}, CDD~\cite{cardei2025cdd}) are properly cited.
    \item[] Guidelines:
    \begin{itemize}
        \item The answer \answerNA{} means that the paper does not use existing assets.
        \item The authors should cite the original paper that produced the code package or dataset.
        \item The authors should state which version of the asset is used and, if possible, include a URL.
        \item The name of the license (e.g., CC-BY 4.0) should be included for each asset.
        \item For scraped data from a particular source (e.g., website), the copyright and terms of service of that source should be provided.
        \item If assets are released, the license, copyright information, and terms of use in the package should be provided. For popular datasets, \url{paperswithcode.com/datasets} has curated licenses for some datasets. Their licensing guide can help determine the license of a dataset.
        \item For existing datasets that are re-packaged, both the original license and the license of the derived asset (if it has changed) should be provided.
        \item If this information is not available online, the authors are encouraged to reach out to the asset's creators.
    \end{itemize}

\item {\bf New assets}
    \item[] Question: Are new assets introduced in the paper well documented and is the documentation provided alongside the assets?
    \item[] Answer: \answerNA{}
    \item[] Justification: The paper does not release new datasets or pretrained models. The method is fully described in Algorithm~\ref{alg:pd}.
    \item[] Guidelines:
    \begin{itemize}
        \item The answer \answerNA{} means that the paper does not release new assets.
        \item Researchers should communicate the details of the dataset\slash code\slash model as part of their submissions via structured templates. This includes details about training, license, limitations, etc.
        \item The paper should discuss whether and how consent was obtained from people whose asset is used.
        \item At submission time, remember to anonymize your assets (if applicable). You can either create an anonymized URL or include an anonymized zip file.
    \end{itemize}

\item {\bf Crowdsourcing and research with human subjects}
    \item[] Question: For crowdsourcing experiments and research with human subjects, does the paper include the full text of instructions given to participants and screenshots, if applicable, as well as details about compensation (if any)?
    \item[] Answer: \answerNA{}
    \item[] Justification: No crowdsourcing or human subjects research was conducted.
    \item[] Guidelines:
    \begin{itemize}
        \item The answer \answerNA{} means that the paper does not involve crowdsourcing nor research with human subjects.
        \item Including this information in the supplemental material is fine, but if the main contribution of the paper involves human subjects, then as much detail as possible should be included in the main paper.
        \item According to the NeurIPS Code of Ethics, workers involved in data collection, curation, or other labor should be paid at least the minimum wage in the country of the data collector.
    \end{itemize}

\item {\bf Institutional review board (IRB) approvals or equivalent for research with human subjects}
    \item[] Question: Does the paper describe potential risks incurred by study participants, whether such risks were disclosed to the subjects, and whether Institutional Review Board (IRB) approvals (or an equivalent approval/review based on the requirements of your country or institution) were obtained?
    \item[] Answer: \answerNA{}
    \item[] Justification: No human subjects research was conducted.
    \item[] Guidelines:
    \begin{itemize}
        \item The answer \answerNA{} means that the paper does not involve crowdsourcing nor research with human subjects.
        \item Depending on the country in which research is conducted, IRB approval (or equivalent) may be required for any human subjects research. If you obtained IRB approval, you should clearly state this in the paper.
        \item We recognize that the procedures for this may vary significantly between institutions and locations, and we expect authors to adhere to the NeurIPS Code of Ethics and the guidelines for their institution.
        \item For initial submissions, do not include any information that would break anonymity (if applicable), such as the institution conducting the review.
    \end{itemize}

\item {\bf Declaration of LLM usage}
    \item[] Question: Does the paper describe the usage of LLMs if it is an important, original, or non-standard component of the core methods in this research? Note that if the LLM is used only for writing, editing, or formatting purposes and does \emph{not} impact the core methodology, scientific rigor, or originality of the research, declaration is not required.
    \item[] Answer: \answerYes{}
    \item[] Justification: GPT-4.1-mini is used as a baseline for comparison in the text and molecular experiments (Tables~\ref{tab:ocean} and~\ref{tab:guidance}). It is not part of the proposed method.
    \item[] Guidelines:
    \begin{itemize}
        \item The answer \answerNA{} means that the core method development in this research does not involve LLMs as any important, original, or non-standard components.
        \item Please refer to our LLM policy in the NeurIPS handbook for what should or should not be described.
    \end{itemize}

\end{enumerate}

\end{document}